\newcommand{\norm}[1]{\left\lVert#1\right\rVert}
\DeclareMathOperator*{\argmax}{argmax}
\DeclareMathOperator*{\softmax}{softmax}
\theoremstyle{plain}
\newtheorem{theorem}{Theorem}[section]
\newtheorem{proposition}[theorem]{Proposition}
\theoremstyle{definition}
\newtheorem{definition}[theorem]{Definition}
\theoremstyle{remark}
\icmltitlerunning{Understanding and Diagnosing Deep Reinforcement Learning}
\begin{document}

\twocolumn[
\icmltitle{Understanding and Diagnosing Deep Reinforcement Learning}

\icmlsetsymbol{equal}{*}

\begin{icmlauthorlist}
\icmlauthor{Ezgi Korkmaz}{a}
\end{icmlauthorlist}

\icmlaffiliation{a}{University College London (UCL)}

\icmlcorrespondingauthor{Ezgi Korkmaz}{ezgikorkmazmail@gmail.com}

\icmlkeywords{Machine Learning, ICML}

\vskip 0.3in
]



\printAffiliationsAndNotice{}  

\begin{abstract}
\vskip-0.01in
Deep neural policies have recently been installed in a diverse range of settings, from biotechnology to automated financial systems. However, the utilization of deep neural networks to approximate the value function leads to concerns on the decision boundary stability, in particular, with regard to the sensitivity of policy decision making to indiscernible, non-robust features due to highly non-convex and complex deep neural manifolds. These concerns constitute an obstruction to understanding the reasoning made by deep neural policies, and their foundational limitations.
 Hence, it is crucial to develop techniques that aim to understand the sensitivities in the learnt representations of neural network policies.
 To achieve this we introduce a theoretically founded method that provides a systematic analysis of the unstable directions in the deep neural policy decision boundary across both time and space.
 Through experiments in the Arcade Learning Environment (ALE), we demonstrate the effectiveness of our technique for identifying correlated directions of instability, and for measuring how sample shifts remold the set of sensitive directions in the neural policy landscape.
 Most importantly, we demonstrate that state-of-the-art robust training techniques yield learning of disjoint unstable directions, with dramatically larger oscillations over time, when compared to standard training.
 We believe our results reveal the fundamental properties of the decision process made by reinforcement learning policies, and can help in constructing reliable and robust deep neural policies.
\vskip-0.09in
\end{abstract}
\section{Introduction}
Reinforcement learning algorithms leveraging the power of deep neural networks have obtained state-of-the-art results initially in game-playing tasks \citep{mn15} and subsequently in continuous control \citep{lil15}. Since this initial success, there has been a continuous stream of developments both of new algorithms \citep{mnih16, hado16, wang16}, and striking new performance records in highly complex tasks \citep{silver17,jul20nat}. While the field of deep reinforcement learning has developed rapidly \citep{daniel23}, the understanding of the representations learned by deep neural network policies has lagged behind.

The lack of understanding of deep neural policies is of critical importance in the context of the sensitivities of policy decisions to imperceptible, non-robust features. Beginning with the work of \citep{szegedy13, fellow15}, deep neural networks have been shown to be vulnerable to adversarial perturbations below the level of human perception. In response, a line of work has focused on proposing training techniques to increase robustness by applying these perturbations to the input of deep neural networks during training time (i.e. adversarial training) \citep{fellow15, madry17}.
Yet, concerns have been raised on these methods including decreased accuracy on clean data \citep{arjun19}, prohibiting generalization \citep{korkmaz2023aaai}, and incorrect invariance to semantically meaningful changes \citep{tramer20nips}. While some studies argued that detecting adversarial directions could be the best we can do so far \citep{korkmaz2023icml}, the diagnostic perspective on understanding policy decision making and vulnerabilities requires urgent further attention.

Thus, it is crucial to develop techniques to precisely understand and diagnose the sensitivities of deep neural policies, in order to effectively evaluate newly proposed algorithms and training methods. In particular, there is a need to have diagnostic methods that can automatically identify policy sensitivities and instabilities that arise under many different scenarios, without requiring extensive research effort for each new instance.

For this reason, in our paper we focus on understanding the learned representations and policy vulnerabilities and ask the following questions:
\emph{(i) How can we analyze the rationale behind deep reinforcement learning decisions?}
\emph{(ii) What is the temporal and spatial relation between non-robust directions on the deep neural policy manifold?}
\emph{(iii) How do the directions of instabilities in the deep neural policy landscape transform under a portfolio of state-of-the-art adversarial attacks?}
\emph{(iv) How does distributional shift affect the learnt non-robust representations in reinforcement learning with high dimensional state representation MDPs?}
\emph{(v) Does the state-of-the-art certified adversarial training solve the problem of learning correlated non-robust representations in sequential decision making?}
To be able to answer these questions in our paper from worst-case to natural directions we focus on understanding the representations learned by deep reinforcement learning policies and make the following contributions:

\begin{itemize}
\item We introduce a theoretically founded novel approach to systematically discover and analyze the spatial and temporal correlation of directions of instability on the deep reinforcement learning manifold.
\item We highlight the connection between neural processing with visual illusion stimulus and our analysis to understand and diagnose deep neural policies.
We conduct extensive experiments in the Arcade Learning Environment with neural policies trained in high-dimensional state representations, and provide an analysis on a portfolio of state-of-the-art adversarial attack techniques. Our results demonstrate the precise effects of adversarial attacks on the non-robust features learned by the policy.
\item We investigate the effects of distributional shift on the correlated vulnerable representation patterns learned by deep reinforcement learning policies to provide a comprehensive and systematic robustness analysis of deep neural policies.
\item Finally, our results demonstrate the presence of non-robust features in adversarially trained deep reinforcement learning policies, and that the state-of-the-art certified robust training methods lead to learning disjoint and spikier vulnerable representations.
\end{itemize}

\section{Background and Preliminaries}

\subsection{Preliminaries}
A Markov Decision Process (MDP) is defined by a tuple $(\mathcal{S}, \mathcal{A}, \mathcal{P}, \mathcal{R}, \gamma)$ where $\mathcal{S}$ is a set of states, $\mathcal{A}$ is a set of actions, $\mathcal{P}:\mathcal{S}\times\mathcal{A}\times\mathcal{S} \to [0,1]$ is the Markov transition kernel,
$\mathcal{R}:\mathcal{S}\times\mathcal{A}\times\mathcal{S} \to \mathbb{R}$ is the reward function, and $\gamma \in [0,1)$ is the discount factor.
A reinforcement learning agent interacts with an MDP by observing the current state $s\in \mathcal{S}$ and taking an action $a \in \mathcal{A}$. The agent then transitions to state $s'$ with probability $\mathcal{P}(s,a,s')$ and receives reward $\mathcal{R}(s,a,s')$. A policy $\pi:\mathcal{S}\times\mathcal{A} \to [0,1]$ selects action $a$ in state $s$ with probability $\pi(s,a)$. The main objective in reinforcement learning is to learn a policy $\pi$ which maximizes the expected cumulative discounted rewards
$R = \mathbb{E}_{a_t \sim \pi(s_t,\cdot)} \sum_t \gamma^t \mathcal{R}(s_t,a_t,s_{t+1})$.
This maximization is achieved by iterative Bellman update to learn a state-action value function \cite{watkins92}
\[
 Q(s_t,a_t) = \mathcal{R}(s_t,a_t,s_{t+1}) + \gamma \sum_{s_t} \mathcal{P}(s_{t+1}|s_t,a_t) V(s_{t+1}).
\]
$Q(s,a)$ converges to the optimal state-action value function, representing the expected cumulative discounted rewards obtained by the optimal policy when starting in state $s$ and taking action $a$ with value function $V(s) = \max_{a \in \mathcal{A}}Q(s,a)$.
Hence, the optimal policy $\pi^*(s,a)$ can be obtained by executing the action $a^*(s) = \argmax_a Q(s,a)$, i.e. the action maximizing the state-action value function in state $s$.

\subsection{Adversarial Perturbation Techniques and Formulations}
\label{advpert}
Following the initial study conducted by \citet{szegedy13}, \citet{fellow15} proposed a fast and efficient way to produce $\epsilon$-bounded adversarial perturbations in image classification based on linearization of $J(x,y)$, the cost function used to train the network, at data point $x$ with label $y$. Consequently, \citet{kurakin16} proposed the iterative form of this algorithm: the iterative fast gradient sign method (I-FGSM).
\begin{equation}
x_{\textrm{adv}}^{N+1} = \textrm{clip}_\epsilon(x_{\textrm{adv}}^N +\alpha \textrm{sign}(\nabla_x J(x^N_{\textrm{adv}},y)))
\end{equation}
This algorithm further has been improved by the proposal of the utilization of the momentum term \citep{don18}. Following this \citet{korkmaz20} proposed a Nesterov momentum technique to compute $\epsilon$-bounded adversarial perturbations for deep reinforcement learning policies by computing the gradient at the point $s_{\textrm{adv}}^t +  \mu \cdot v_t$,
\begin{align}
v_{t+1} = \mu \cdot v_t &+ \dfrac{\nabla_{s_\textrm{adv}}J(s_{\textrm{adv}}^t +  \mu \cdot v_t ,a)}{\lVert\nabla_{s_\textrm{adv}}J(s_{\textrm{adv}}^t +  \mu \cdot v_t ,a)\rVert_1} \\
s_\textrm{adv}^{t+1} &= s_\textrm{adv}^{t} + \alpha \cdot  \dfrac{v_{t+1}}{\lVert v_{t+1}\rVert_2}
\end{align}
Another class of algorithms for computing adversarial perturbations focuses on different methods for computing the smallest possible perturbation which successfully changes the output of the target function. The DeepFool method of \citet{dezfool16} works by repeatedly computing projections to the closest separating hyperplane of a linearization of the deep neural network at the current point. \citet{carlini17} proposed targeted adversarial formulations in image classification based on distance minimization between the original sample and the adversarial sample
\begin{equation}
\mathnormal{\min_{x_{\textrm{adv}} \in \mathcal{X}} c\cdot J(x_{\textrm{adv}}) + \norm{x_{\textrm{adv}}-x}_2^2}
\label{carlini}
\end{equation}
Another variant of this algorithm is based on $\ell_1$-regularization of the $\ell_2$-norm bounded \citet{carlini17} adversarial formulation \cite{ead18}.
\[
\mathnormal{\min_{x_{\textrm{adv}} \in \mathcal{X}} c\cdot  J(x_{\textrm{adv}}) + \sigma_1\norm{x_{\textrm{adv}}-x}_1 + \sigma_2\norm{x_{\textrm{adv}}-x}_2^2}
\]

\subsection{Deep Reinforcement Learning Policies and Adversarial Effects}
\label{advrl}
Beginning with the work of \citet{huang17} and \citet{kos17}, which introduced adversarial examples based on FGSM to deep reinforcement learning, there has been a long line of research on both adversarial attacks and robustness for deep neural policies. 
On the attack side, \citet{korkmazuai} showed that it is possible to compute adversarial perturbations for robust deep reinforcement learning policies, and further proposed tools to interpret the non-robustness of deep neural policies.
More intriguingly, later study discovered that deep reinforcement learning policies learn similar adversarial directions across MDPs intrinsic to the training environment, thus revealing an underlying approximately linear structure learnt by deep neural policies \citep{korkmazaaai}.  
On the defense side \citet{pinto17} model the interaction between an adversary producing perturbations and the deep neural policy taking actions as a zero-sum game, and train the policy jointly with the adversary in order to improve robustness.
More recently, \citet{huan20} formalized the adversarial problem in deep reinforcement learning by introducing a modified MDP definition which they term State-Adversarial MDP (SA-MDP). Based on this model the authors proposed a theoretically motivated \textit{certified robust} adversarial training algorithm called SA-DQN.
Quite recently, \citet{korkmaz2023aaai} provided a contrast between natural directions and adversarial directions with respect to their perceptual similarity to base states and impact on the policy performance. While the results in this paper demonstrate that certified adversarial training techniques limit the generalization capabilities of deep reinforcement learning policies, the paper further argues the need for rethinking robustness in deep reinforcement learning.
While recent studies raised some concerns on the drawbacks of certified adversarial training techniques from generalization to security, these studies lack a method of explaining and understanding the main problems of robustness in deep reinforcement learning, and in particular with clear analysis of the vulnerabilities of the policies.

\section{Probing the Deep Neural Policy Manifold via Non-Lipschitz Directions}

In our paper our goal is to seek answers for the following questions:
\begin{itemize}
\item \textit{What is the reasoning behind deep reinforcement learning decision making?}
\item \textit{How can we analyze the robustness of deep reinforcement learning policies across time and space?}
\item \textit{What are the effects of distributional shift on the vulnerable representations learnt?}
\item \textit{How do adversarial attacks remold the volatile patterns learnt by the neural policies?}
\item \textit{Does adversarial training ensure learning robust and safe policies without any vulnerability?}
\end{itemize}
To be able to answer these questions we propose a principled robustness appraisal method that probes the deep reinforcement learning manifold via non-Lipschitz directions across time and across space. In the remainder of this section we explain in detail our proposed method.

\begin{definition}[\emph{$\epsilon$-non-Lipschitz Direction}]
  Let $Q$ be a state-action value function and let $\epsilon > 0$. For a state $s \in \mathcal{S}$ and vector $w \in\mathbb{R}^d$, let $\hat{s} = s+\epsilon w$. The vector $v$ is an $\epsilon$-non-Lipschitzness direction that uncovers the high-sensitivities of the deep neural manifold for $Q$ in state $s$ if
\vskip-0.2in
\begin{align}
\label{def1}
v = \argmax_{\lVert w\rVert_2=1} Q(\hat{s},  \argmax_{a \in \mathcal{A}} &Q(\hat{s},a)) \\
& - Q(\hat{s}, \argmax_{a \in \mathcal{A}} Q(s,a)). \nonumber
\end{align}
\end{definition}
\vskip-0.2in
In words, $v$ is a non-Lipschitz direction when adding a perturbation of $\ell_2$-norm $\epsilon$ along $v$ maximizes the difference between the maximum state-action value in the new state and the value assigned in the new state to the previously maximal action. Eqn \ref{def1} can be approximated by using the softmax cross entropy loss.\footnote{$\pi(s,a)$ is defined as the softmax policy of the state-action value function $\pi(s,a) = \dfrac{e^{(Q(s,a)/T)}}{\sum_{a' \in \mathcal{A}} e^{(Q(s,a')/T)}}$.}
The cross entropy loss between the softmax policy in state $s_g$ and the argmax policy $\tau(s,a) = \mathbbm{1}_{a = \argmax_{a'} \pi(s,a')}(a)$ at state $s$ is
\begin{align*}
J(s,s_g) &= -\sum_{a \in \mathcal{A}} \tau(s,a) \log(\pi(s_g,a))  \\
&= -\log(\pi(s_g,a^*(s))).
\end{align*}
Therefore by definition of the softmax policy we have
\begin{align*}
J(s,s_g) &= \log \sum_{a' \in \mathcal{A}} e^{Q(s_g,a')/T} - Q(s_g,a^*(s))/T  \\
&\approx (Q(s_g,a^*(s_g)) - Q(s_g,a^*(s)))/T
\end{align*}
where the final approximate equality becomes close to an equality as $T$ gets smaller. 
Setting $v = s_g - s$, shows that maximizing the softmax cross entropy approximates the maximization in Eqn \ref{def1}.
Hence, the gradient $\nabla_{s_g} J(s,s_g)\rvert_{s_g = s}$ gives the direction of the largest increase in cross-entropy when moving from state $s$. Intuitively this is the direction along which the policy distribution $\pi(s,a)$ will most rapidly diverge from the argmax policy. Hence, $\nabla_{s_g} J(s,s_g)\rvert_{s_g = s}$ is a high-sensitivity direction in the neural policy landscape in state $s$.
Fundamentally, moving along the non-Lipschitz directions on the deep neural policy decision boundary will uncover the non-robust features learnt by the reinforcement learning policy.
To capture the correlated non-robust features we must aggregate the information on high-sensitivity directions from a collection of states visited while utilizing the policy $\pi$ in a given MDP. We thus define a single direction which captures the aggregate non-robust feature information from multiple states via the first principal component of the non-Lipschitz directions as follows:
\begin{definition}[\emph{Principal non-Lipschitz direction}]
  \label{def:correlatedfeature}
  Given a set of $n$ states $S = \{s_i\}_{i=1}^{n}$ the principal non-Lipschitz direction is the vector $\mathcal{G}_S$ given by
  \[
    \mathcal{G}_S = \argmax_{\left\{z\in \mathbb{R}^d \mid \lVert z \rVert_2 =1\right\}} \frac{1}{n}\sum_{i=1}^{n} \langle z, \nabla_{s_g} J(s_i,s_g) \rvert_{s_g = s_i}\rangle^2.
  \]
\end{definition}
\begin{proposition}[\emph{Spectral characterization of principal non-Lipschitz directions}]
  \label{prop:eigenvector}
  Given a set of $n$ states $S = \{s_i\}_{i=1}^{n}$ define the matrix $\mathcal{L}(S)$ by
  \begin{equation*}
  \mathcal{L}(S) = \frac{1}{n}\sum_{i=1}^{n}  \nabla_{s_g} J(s_i,s_g)\rvert_{s_g = s_i}[\nabla_{s_g} J(s_i,s_g)\rvert_{s_g = s_i}]^{\top}.
\end{equation*}
  Then $\mathcal{G}_S$ is the eigenvector corresponding to the largest eigenvalue of $\mathcal{L}(S)$.
\end{proposition}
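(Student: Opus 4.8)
The plan is to rewrite the objective in Definition~\ref{def:correlatedfeature} as a single quadratic form in $z$ and then invoke the variational (Rayleigh--Ritz) characterization of the largest eigenvalue of a symmetric matrix. Write $g_i = \nabla_{s_g} J(s_i,s_g)\rvert_{s_g = s_i} \in \mathbb{R}^d$. Since $\langle z, g_i\rangle^2 = z^\top g_i g_i^\top z$, linearity of the finite sum gives
\[
\frac{1}{n}\sum_{i=1}^{n} \langle z, g_i\rangle^2 \;=\; z^\top \Bigl(\frac{1}{n}\sum_{i=1}^{n} g_i g_i^\top\Bigr) z \;=\; z^\top \mathcal{L}(S)\, z,
\]
so the defining property of $\mathcal{G}_S$ becomes $\mathcal{G}_S = \argmax_{\|z\|_2 = 1} z^\top \mathcal{L}(S)\, z$, i.e. $\mathcal{G}_S$ maximizes the Rayleigh quotient of $\mathcal{L}(S)$ over the unit sphere.

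Next I would record the structural facts about $\mathcal{L}(S)$: it is symmetric, since each outer product $g_i g_i^\top$ is, and positive semidefinite, since $z^\top g_i g_i^\top z = \langle z, g_i\rangle^2 \ge 0$ for every $z$. The unit sphere $\{z : \|z\|_2 = 1\}$ is compact and $z \mapsto z^\top \mathcal{L}(S) z$ is continuous, so the maximum is attained. From here there are two equivalent routes to finish. Route (a): diagonalize $\mathcal{L}(S) = \sum_j \lambda_j u_j u_j^\top$ via the spectral theorem with an orthonormal eigenbasis $u_1,\dots,u_d$ and $\lambda_1 \ge \lambda_2 \ge \cdots \ge \lambda_d \ge 0$; writing $z = \sum_j c_j u_j$ with $\sum_j c_j^2 = 1$ yields $z^\top \mathcal{L}(S) z = \sum_j \lambda_j c_j^2 \le \lambda_1$, with equality exactly when $z$ lies in the $\lambda_1$-eigenspace. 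Route (b): apply Lagrange multipliers to $z^\top \mathcal{L}(S) z - \lambda(\|z\|_2^2 - 1)$; stationarity forces $\mathcal{L}(S) z = \lambda z$, so every constrained critical point is a unit eigenvector whose objective value equals its eigenvalue, and the largest such value is $\lambda_1$. Either way, a maximizer is precisely a unit eigenvector of $\mathcal{L}(S)$ belonging to its largest eigenvalue, which is the claim.

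There is essentially no hard step here; the proposition is a direct instance of the Rayleigh--Ritz principle, and the only points deserving a word of care are (i) that the maximum is genuinely attained, which is handled by compactness and continuity, and (ii) that ``the'' eigenvector is only well defined up to sign and, when $\lambda_1$ has multiplicity greater than one, only up to the choice of a unit vector within the top eigenspace. I would therefore phrase the conclusion as ``$\mathcal{G}_S$ is \emph{a} unit eigenvector associated with the largest eigenvalue of $\mathcal{L}(S)$,'' and optionally add the mild genericity remark that when the $g_i$ are in general position the top eigenvalue is simple, so that $\mathcal{G}_S$ is unique up to sign.
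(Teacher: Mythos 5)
Your proposal is correct and follows essentially the same route as the paper's proof: rewrite the averaged squared inner products as the quadratic form $z^{\top}\mathcal{L}(S)z$ and invoke the variational (Rayleigh--Ritz) characterization of the largest eigenvalue. The extra remarks on attainment of the maximum and on uniqueness up to sign and eigenspace multiplicity are sound refinements but not a different argument.
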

\begin{proof}
Observe that by linearity of the inner product
\begin{align*}
    & \frac{1}{n} \sum_{i=1}^{n}  \langle z, \nabla_{s_g} J(s_i,s_g)\rvert_{s_g = s_i}\rangle^2 \\
    &=  \frac{1}{n}\sum_{i=1}^{n} z^{\top} \nabla_{s_g} J(s_i,s_g)\rvert_{s_g = s_i}[\nabla_{s_g} J(s_i,s_g)\rvert_{s_g = s_i}]^{\top}z\\
    &=  z^{\top}(\frac{1}{n}\sum_{i=1}^{n}  \nabla_{s_g} J(s_i,s_g)\rvert_{s_g = s_i} [\nabla_{s_g} J(s_i,s_g)\rvert_{s_g = s_i}]^{\top})z  \\
    &= z^{\top}\mathcal{L}(S)z.
\end{align*}
Thus
$\mathcal{G}_S = \argmax_{\left\{z \in \mathbb{R}^d \mid \lVert z \rVert_2 =1\right\}} z^{\top}\mathcal{L}(S) z$.
Therefore, by the variational characterization of eigenvalues, $\mathcal{G}_S$ is the eigenvector corresponding to the largest eigenvalue of $\mathcal{L}(S)$.
\end{proof}

Thus, the dominant eigenvector corresponds to $\mathcal{G}_S$, the largest correlation with non-Lipschitz directions across time, which follows from the standard analysis of principal component analysis.
 Also note that $\mathcal{G}_S$ has the same dimensions as each state $s$, and thus can easily be rendered in the same format as the states to visualize non-robust features.
Proposition \ref{prop:eigenvector} shows that $\mathcal{G}_S$ can be computed by solving an eigenvalue problem. Proposition \ref{prop:eigenvector} is the basis for Algorithm \ref{hmap}, which computes $\mathcal{G}_S$ by first calculating $\mathcal{L}(S)$ by summing over states, and then outputs the maximum eigenvector.
Next we demonstrate how RA-NLD can be used to measure the effects of environment changes on the correlated non-robust features both visually and quantitatively.

\begin{definition}[\emph{Encountered set of states}]
Let $\Psi:\mathcal{S} \to \mathcal{S}$ be a function that transforms states $s \in \mathcal{S}$ of an MDP $\mathcal{M}$.
Let $S$ be the set of states encountered when utilizing policy $\pi$ in $\mathcal{M}$.
Then $S^\Psi$ is defined to be the set of states encountered when utilizing the policy $\pi\circ\Psi$ in $\mathcal{M}$ i.e. when the policy state observations are transformed via $\Psi$.
\end{definition}
In this setting, comparing $\mathcal{G}_S$ and $\mathcal{G}_{S^\Psi}$ will provide a qualitative picture of how the environmental change affects the learned vulnerable representation patterns.
In order to give a more quantitative metric for this change we define
\begin{definition}[\emph{Feature Correlation Quotient}]
  For two sets of states $S$ and $S'$, the feature correlation quotient is given by
  \[
  \Lambda(S',S) = \frac{\mathcal{G}_{S'}^{\top}\mathcal{L}(S)\mathcal{G}_{S'}}{\mathcal{G}_S^{\top}\mathcal{L}(S)\mathcal{G}_S}.
  \]
\end{definition}

\begin{algorithm}[t]
   \caption{RA-NLD: Robustness Analysis via Non-Lipschitz Directions in the Deep Neural Policy Manifold}
   \label{hmap}
\begin{algorithmic}
   \STATE {\bfseries Input:} MDP $\mathcal{M}$, state-action value function $Q(s,a)$, actions $a \in \mathcal{A}$, states $s \in \mathcal{S}$, the transition probability kernel $\mathcal{P}(s,a,s')$ 
    \STATE {\bfseries Output:} Principal non-Lipschitz direction $\mathcal{G}(i,j)$
   \FOR{$s =s_0$ {\bfseries to} $s_T$}
   \STATE $\tau(s,a) = \mathbbm{1}_{a = \argmax_{a'} Q(s,a')}(a)$
   \STATE $\pi(s_{\textrm{g}},a) = \softmax(Q(s_{\textrm{g}},a))$
   \STATE $J(s,s_g)$ = $- \sum_{a \in \mathcal{A}} \tau(s,a)\log(\pi(s_g,a))$
    \STATE $\mathcal{L}$ += $\nabla_{s_g} J(s,s_g)\rvert_{s_g = s} [\nabla_{s_g} J(s,s_g)\rvert_{s_g = s}]^{\top}$
   \ENDFOR
   \STATE {\bfseries Return:} Eigenvector $\mathcal{G}$ corresponding to largest eigenvalue of $\mathcal{L}$
\end{algorithmic}
\end{algorithm}

\begin{proposition}[\emph{Boundedness of Feature Correlation Quotient}]
  For any two sets of states $S$ and $S'$ it holds that $0 \leq \Lambda(S',S) \leq 1.$
\end{proposition}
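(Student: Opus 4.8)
The plan is to observe that $\mathcal{L}(S)$ is positive semidefinite and that both the numerator and the denominator of $\Lambda(S',S)$ are Rayleigh quotients of $\mathcal{L}(S)$ evaluated at unit vectors, so the bound reduces to elementary spectral facts.

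First I would record that $\mathcal{L}(S) = \frac{1}{n}\sum_{i=1}^n g_i g_i^\top$ with $g_i = \nabla_{s_g} J(s_i,s_g)\rvert_{s_g = s_i}$ is symmetric positive semidefinite: for any $z \in \mathbb{R}^d$, $z^\top \mathcal{L}(S) z = \frac{1}{n}\sum_{i=1}^n \langle z, g_i\rangle^2 \geq 0$. In particular the numerator $\mathcal{G}_{S'}^\top \mathcal{L}(S)\mathcal{G}_{S'} \geq 0$, and (assuming the nondegenerate case in which not all $g_i$ vanish, so that $\mathcal{L}(S)\neq 0$ and the largest eigenvalue $\lambda_{\max}(\mathcal{L}(S)) > 0$) the denominator is strictly positive by Proposition \ref{prop:eigenvector}, which identifies $\mathcal{G}_S^\top\mathcal{L}(S)\mathcal{G}_S = \lambda_{\max}(\mathcal{L}(S))$. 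This yields $\Lambda(S',S) \geq 0$.

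For the upper bound I would invoke the variational characterization of the top eigenvalue: $\lambda_{\max}(\mathcal{L}(S)) = \max_{\lVert z\rVert_2 = 1} z^\top \mathcal{L}(S) z$. Since $\mathcal{G}_{S'}$ is by Definition \ref{def:correlatedfeature} a unit vector, $\mathcal{G}_{S'}^\top \mathcal{L}(S)\mathcal{G}_{S'} \leq \lambda_{\max}(\mathcal{L}(S)) = \mathcal{G}_S^\top \mathcal{L}(S)\mathcal{G}_S$, and dividing by the positive denominator gives $\Lambda(S',S) \leq 1$. Combining the two bounds gives $0 \leq \Lambda(S',S) \leq 1$.

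The only genuine subtlety — and the step I would be most careful about — is the degenerate case $\mathcal{L}(S) = 0$, where the quotient is a $0/0$ indeterminate form; here I would simply note that the definition implicitly assumes at least one non-Lipschitz gradient at the states of $S$ is nonzero, which holds generically, so the denominator never vanishes. Everything else is routine: the positive semidefiniteness of an average of rank-one outer products and the Courant–Fischer characterization of $\lambda_{\max}$, the latter already supplied in the proof of Proposition \ref{prop:eigenvector}.
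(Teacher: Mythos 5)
Your proposal is correct and follows essentially the same route as the paper's own proof: the upper bound via the variational characterization of the top eigenvalue from Proposition~\ref{prop:eigenvector}, and the lower bound from positive semidefiniteness of $\mathcal{L}(S)$ as a sum of rank-one outer products. Your explicit handling of the degenerate case $\mathcal{L}(S)=0$ is a small point of additional care that the paper's proof silently omits.
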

\begin{proof}
  By Proposition \ref{prop:eigenvector},
  \[
  \mathcal{G}_{S'}^{\top}\mathcal{L}(S)\mathcal{G}_{S'} \leq \max_{\lVert z \rVert_2 =1} z^{\top}\mathcal{L}(S)z
    = \mathcal{G}_{S}^{\top}\mathcal{L}(S)\mathcal{G}_{S}
  \]
Thus the numerator of $\Lambda(S',S)$ is always less than or equal to the denominator i.e. $\Lambda(S',S) \leq 1$. Furthermore, $\mathcal{L}(S)$ is positive semidefinite, as it is a sum of rank one projection matrices, and hence $\Lambda(S',S)\geq 0$.
\end{proof}

\begin{figure*}[t]
\footnotesize
\centering
\stackunder[4pt]{\includegraphics[scale=0.168]{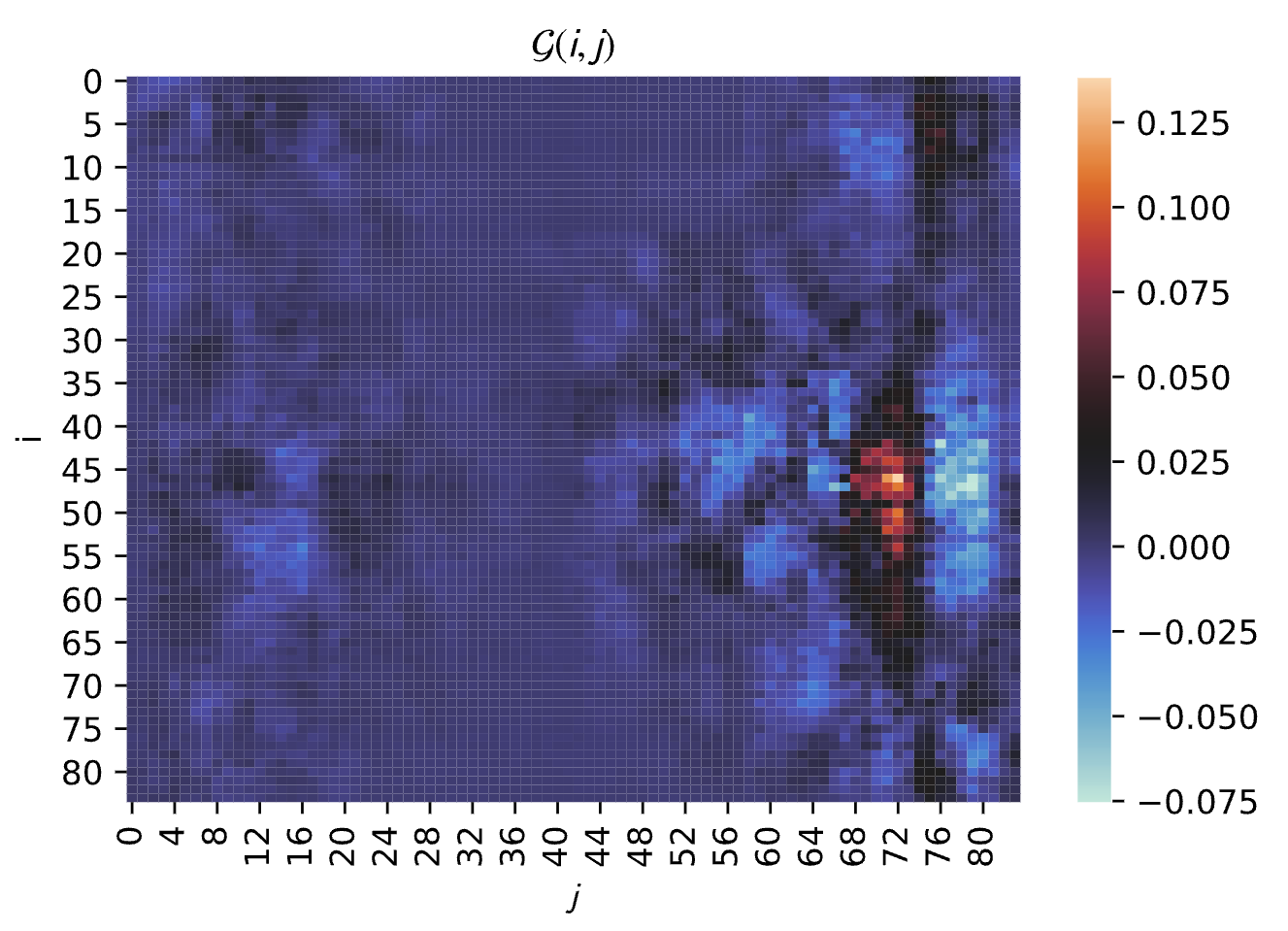}}{}
\hskip 0.1pt
\stackunder[4pt]{\includegraphics[scale=0.158]{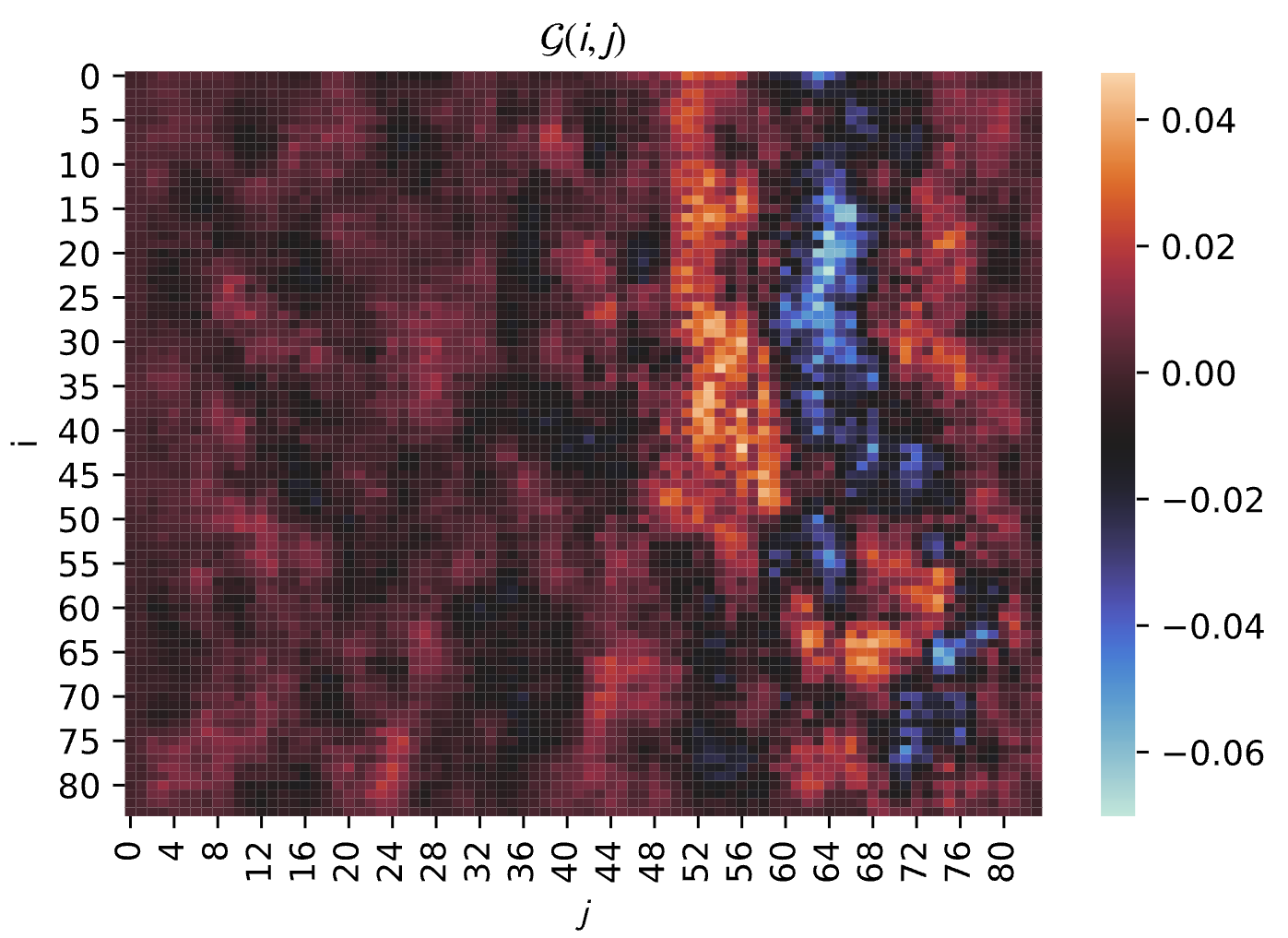}}{}
\stackunder[4pt]{\includegraphics[scale=0.152]{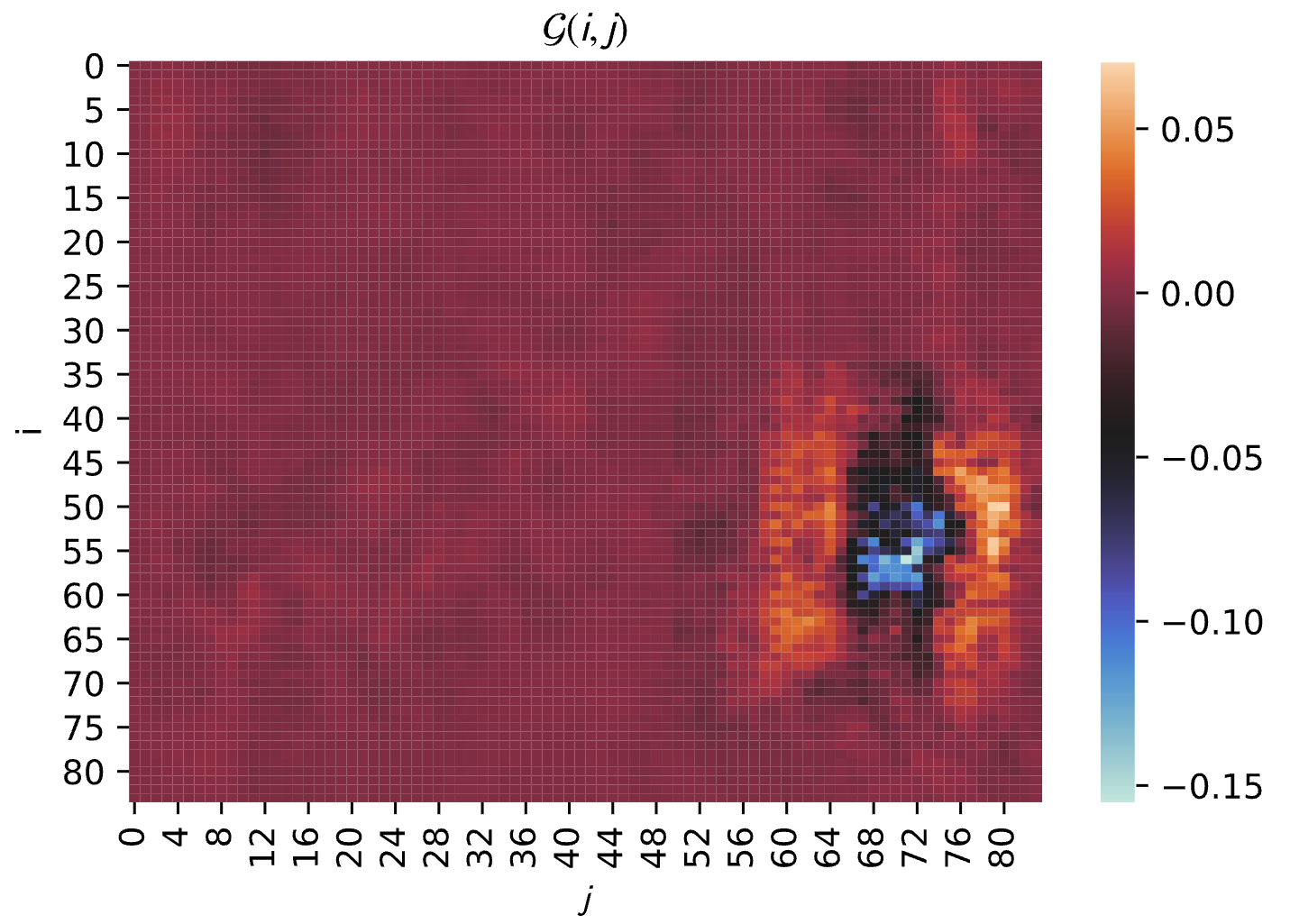}}{}
\stackunder[4pt]{\includegraphics[scale=0.158]{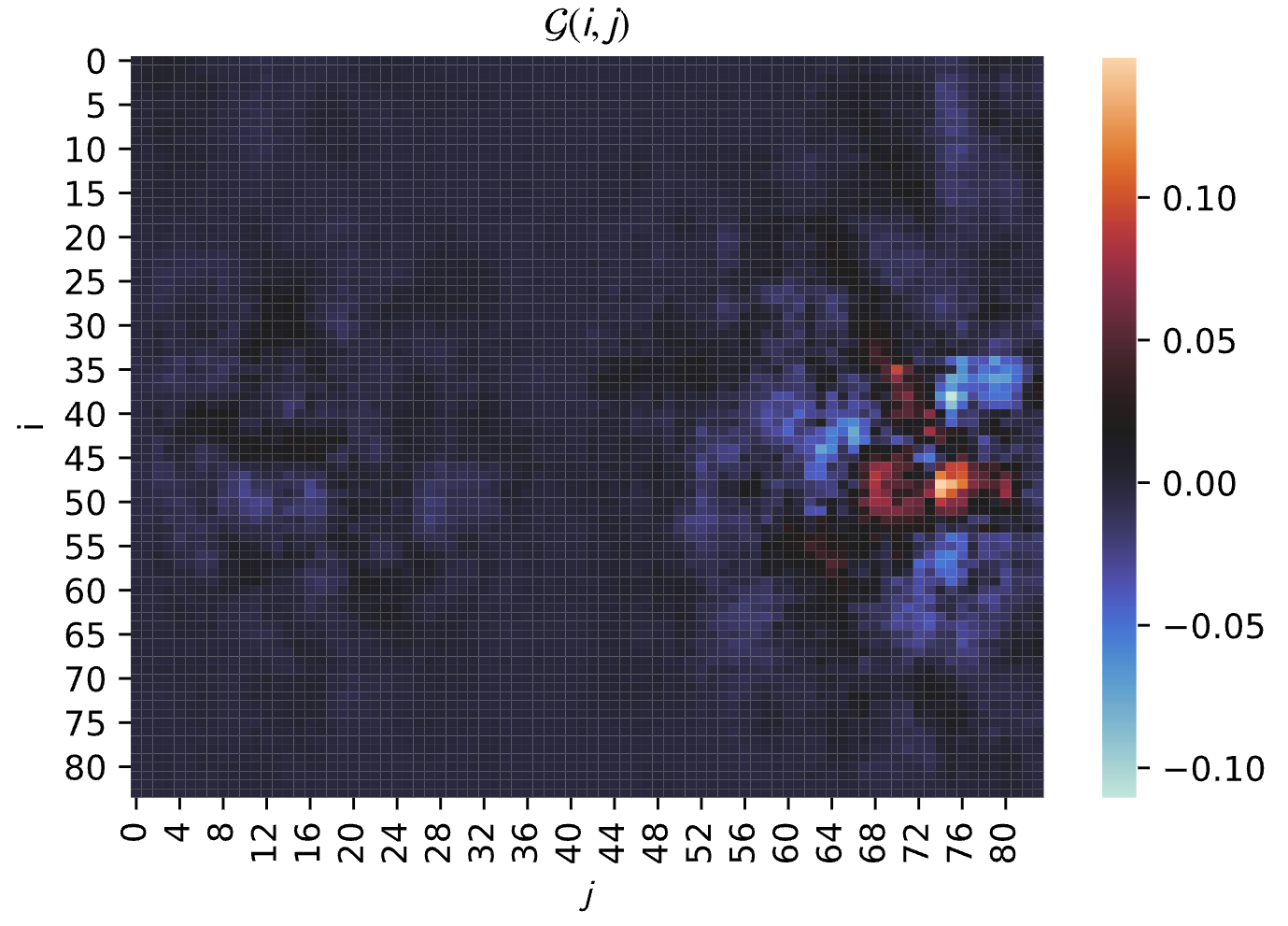}}{} \\
\vskip-0.12in
\stackunder[4pt]{\includegraphics[scale=0.16]{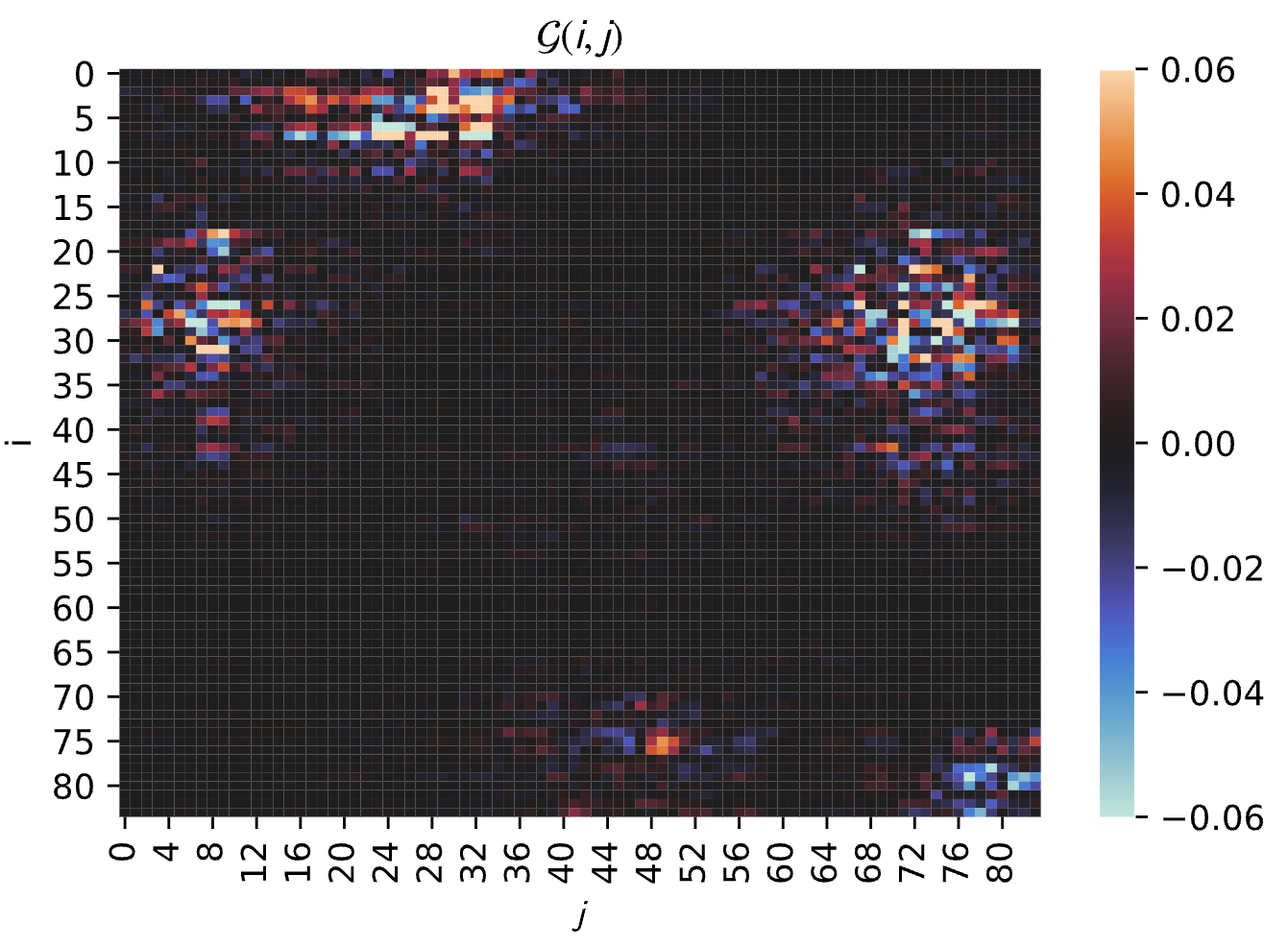}}{}
\stackunder[4pt]{\includegraphics[scale=0.16]{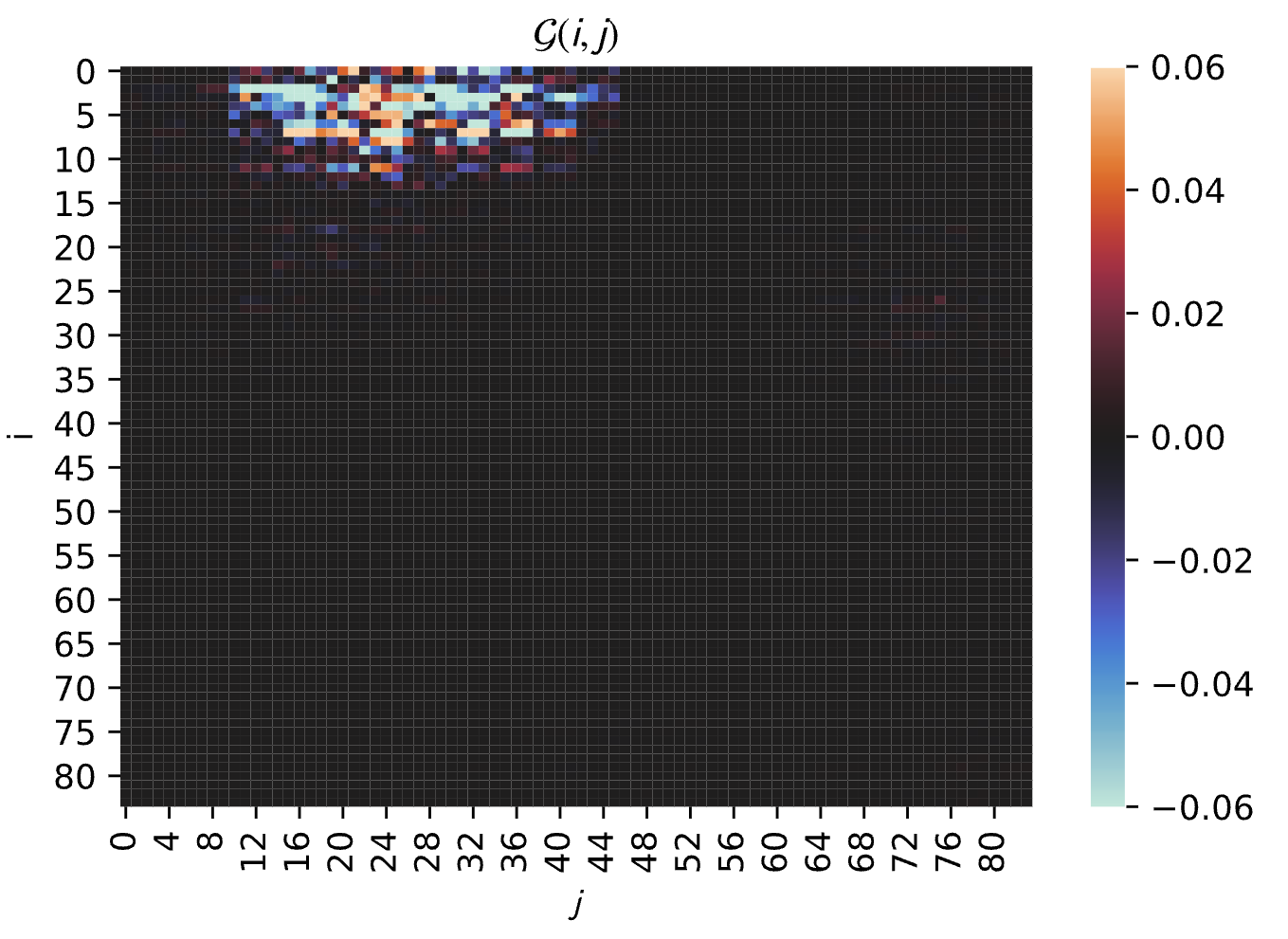}}{}
\stackunder[4pt]{\includegraphics[scale=0.16]{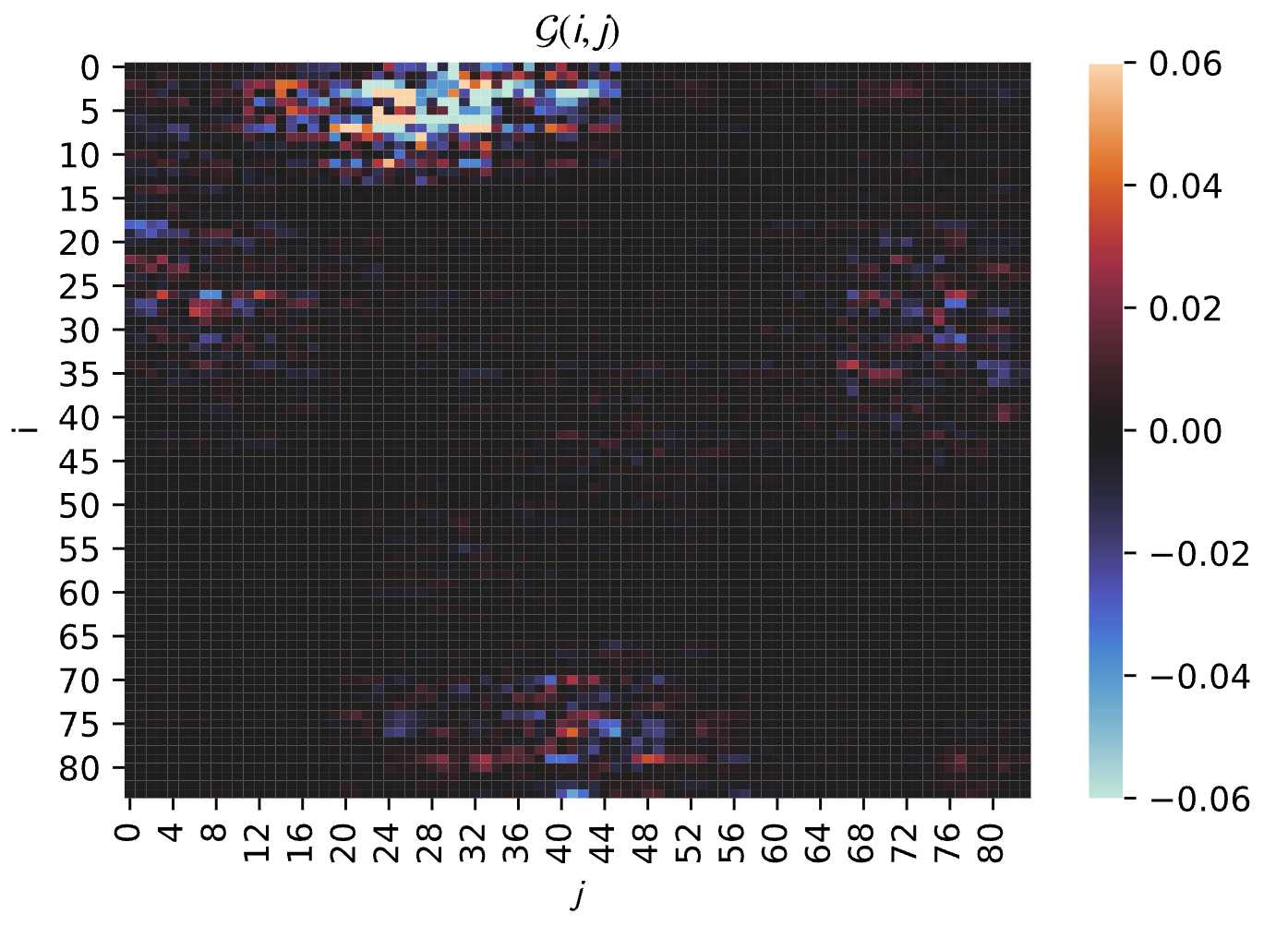}}{}
\stackunder[4pt]{\includegraphics[scale=0.16]{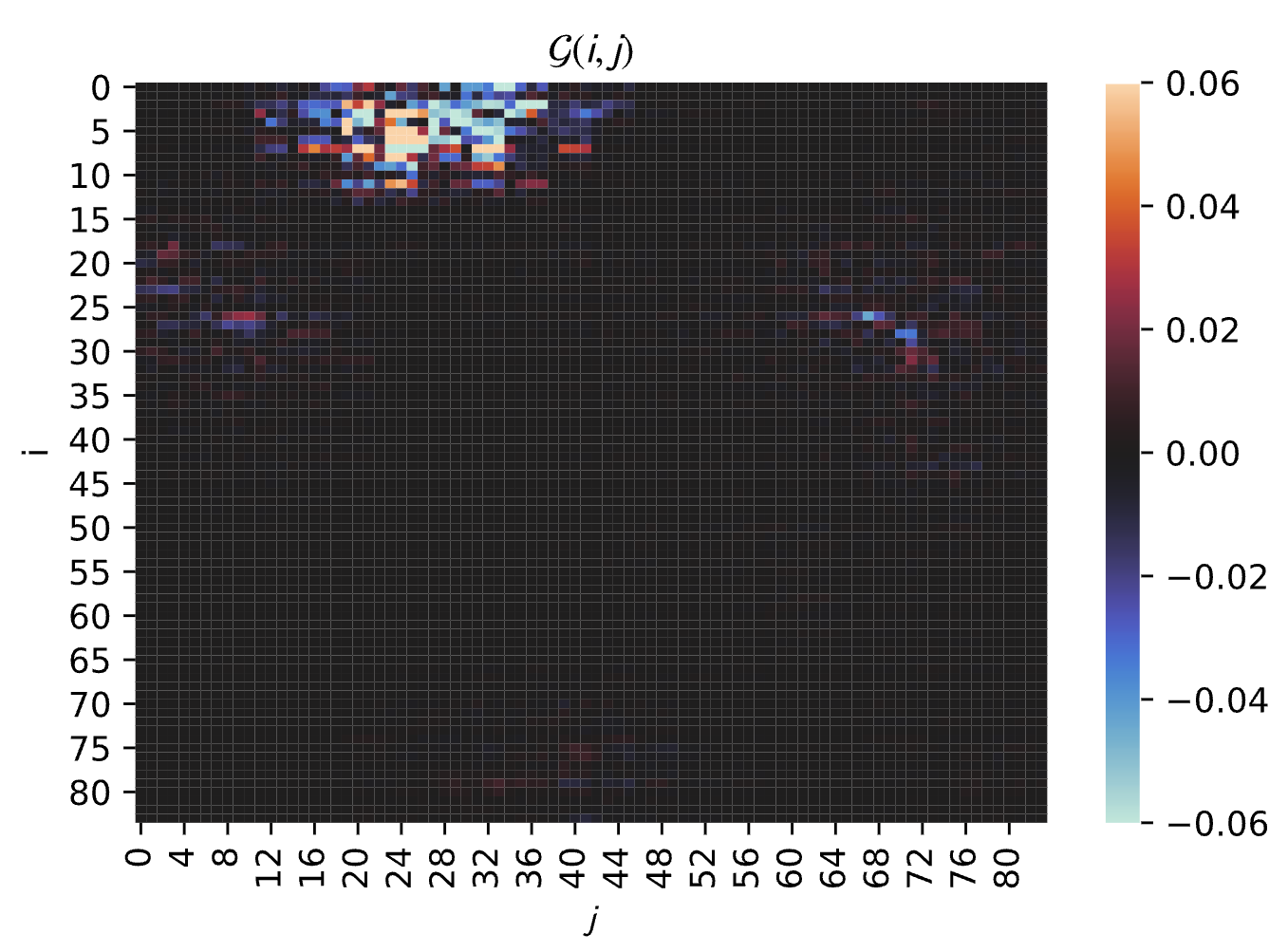}}{}
\vskip -0.16in
\caption{RA-NLD results of untransformed states and states under adversarial perturbations computed via Carlini\&Wagner, Nesterov Momentum, and elastic-net regularization for Pong and BankHeist. Row1: Pong. Row2: BankHeist. Column1: Untransformed. Column2: C\&W. Column3: Nesterov Momentum. Column4: Elastic-Net}
\label{gmapattack}
\vskip -0.12in
\end{figure*}

\begin{table*}[t!]
\vskip -0.06in
\caption{The feature correlation quotient $\Lambda(\hat{S},S)$ and $\Lambda(S^\Psi,S)$ for the adversarial transformations: Carlini\&Wagner, Nesterov Momentum, DeepFool, Elastic-Net.}
\vskip -0.01in
\label{quoadv}
\centering
\scalebox{0.98}{
\begin{tabular}{lcccccr}
\toprule
Environments              &  Base Observations
						    				  &  Carlini\&Wagner
						    				  &  Nesterov Momentum
						              &  DeepFool
						    				  &  Elastic-Net \\
\midrule
Freeway     & 0.9917$\pm$0.0023    & 0.9499$\pm$0.02056   & 0.7868$\pm$0.02162   & 0.6869$\pm$0.02981      & 0.72590$\pm$0.0592       \\
BankHeist   & 0.8360$\pm$0.0116    & 0.2837$\pm$0.02316   & 0.3407$\pm$0.02412   & 0.1748$\pm$0.04421      & 0.30917$\pm$ 0.0521       \\
RoadRunner  & 0.7652$\pm$0.0385    & 0.1621$\pm$0.02199   & 0.3826$\pm$0.03118   & 0.5353$\pm$0.03127      & 0.52506$\pm$ 0.0782       \\
Pong 		  	& 0.4934$\pm$0.0391    & 0.0408$\pm$0.04056   & 0.3444$\pm$0.01981   & 0.3277$\pm$0.02871      & 0.10529$\pm$ 0.0629       \\
\bottomrule
\end{tabular}
}
\end{table*}

Therefore, the feature correlation quotient $\Lambda(S',S)$ is a number between zero and one which intuitively measures how correlated the non-robust features from $S'$  are to those from $S$.
When measuring how an environmental change affects the decisions made by the deep neural policy and the non-robust representations learnt, it is also important to take the stochastic nature of the MDP into account. In particular, the non-robust features observed with two different executions of the same policy may differ slightly due to the inherent randomness of the MDP.
To account for this, we first collect a baseline set of states with no modification $S$. We then collect a set of states $\hat{S}$ with no modification, and $S^\Psi$ with modification. By comparing $\Lambda(\hat{S},S)$ to $\Lambda(S^\Psi,S)$ we can see how much of the decrease in average correlation is caused by the stochastic nature of the MDP, and how much of the decrease is caused by the environmental change.

\section{Experimental Analysis}

The deep reinforcement learning policies evaluated in our experiments are trained with the Double Deep Q-Network algorithm \cite{hado16} initially proposed in \cite{hado10} with the architecture proposed by \citet{wang16}, and State-Adversarial Double Deep Q-Network (see Section \ref{advrl}) with experience replay \cite{tom16}. The set of states $S$ is collected over 10 episodes. We use the adversarial methodology from \citet{korkmaz2023icml}. The adversarial perturbation hyperparameters are: for the Carlini\&Wagner formulation $\kappa$ is $10$, learning rate is $0.01$, initial constant is 10, for the elastic-net regularization formulation $\beta$ is $0.0001$, learning rate is $0.1$, maximum iteration is $300$, for Nesterov Momentum $\epsilon$ is $0.001$, and decay factor is $0.1$.\footnote{The hyperparameters for the adversarial attacks are fixed to the same levels as base studies to provide transparency and consistency with the prior work. Furthermore, note that the setting is also optimized to achieve the most effective adversarial perturbations (i.e. perturbations causing the largest decrease on the discounted expected cumulative rewards obtained by the policy).}

\setcounter{footnote}{1}

\subsection{Non-Robust Feature Shifts under Adversarial Perturbations}
In this section we investigate the effects of adversarial attacks on the learnt correlated non-robust features. Figure \ref{gmapattack} reports the RA-NLD results for the untransformed states and the adversarially attacked state observations. In particular, these perturbations are computed via the Nesterov momentum, Carlini\&Wagner, and elastic-net regularization formulations (see Section \ref{advpert}). 
Figure \ref{gmapattack} demonstrates that different adversarial formulations surface different sets of correlated non-robust features. Depending on the perturbation type, the correlated directions of instability can change quite noticeably. In fact, while the Carlini\&Wagner formulation leaves a distinct signature on the vulnerable representation pattern, the non-robust features under
\begin{figure*}[t]
\centering
\footnotesize
\stackunder[6pt]{\includegraphics[scale=0.2]{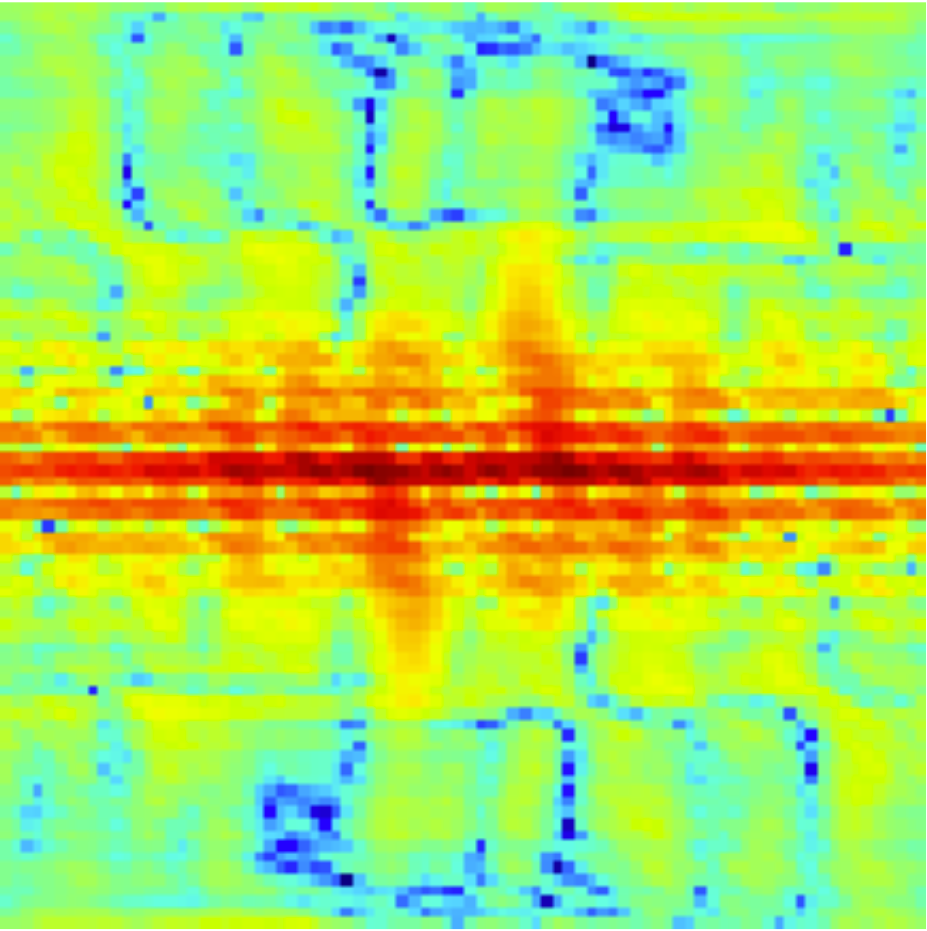}}{}
\hskip 0.1pt
\stackunder[6pt]{\includegraphics[scale=0.2]{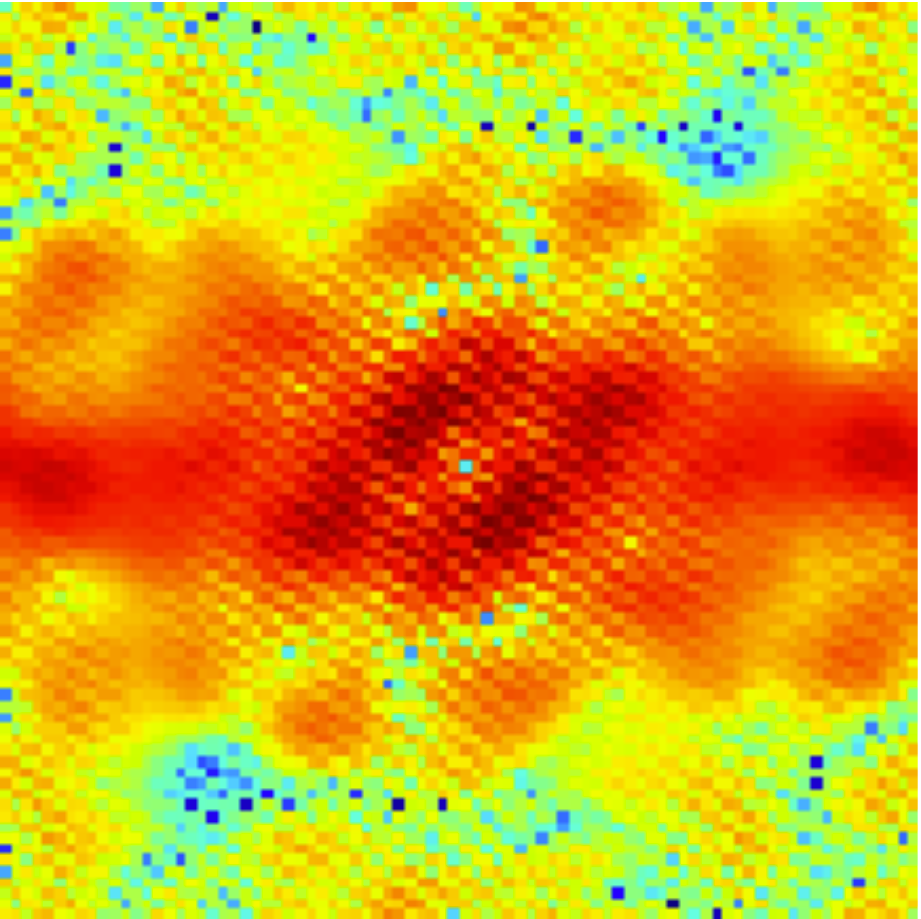}}{}
\stackunder[6pt]{\includegraphics[scale=0.2]{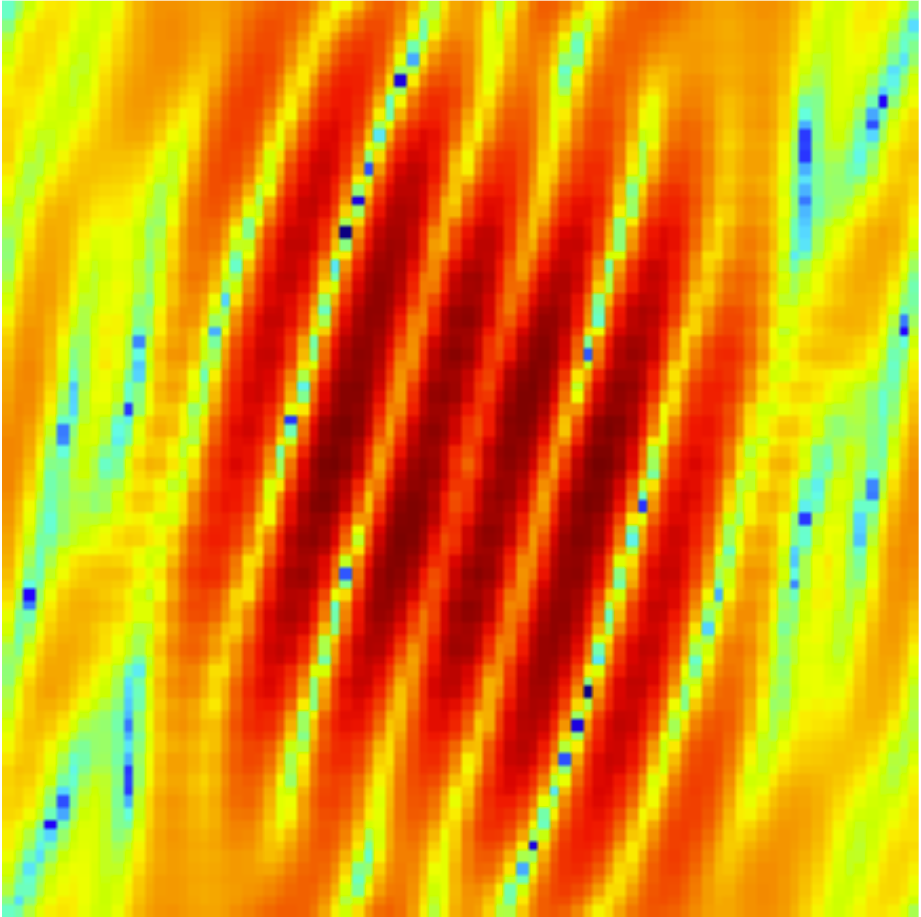}}{}
\stackunder[6pt]{\includegraphics[scale=0.2]{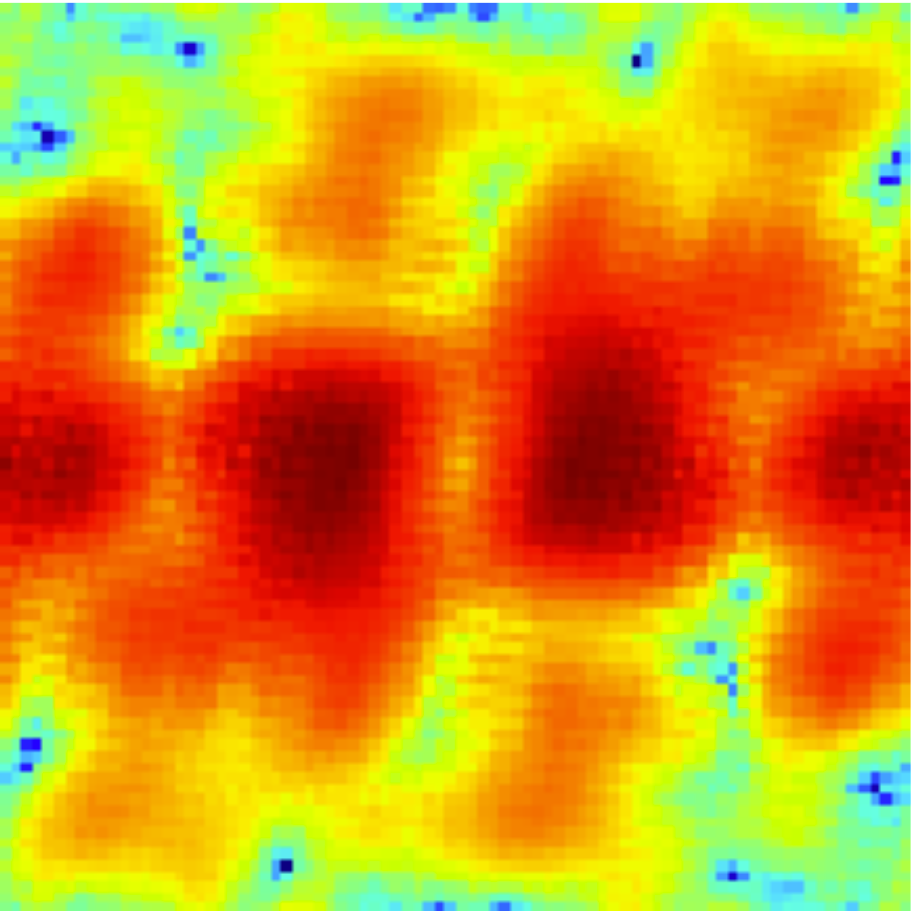}}{}\\
\vskip -0.08in
\stackunder[4pt]{\includegraphics[scale=0.2]{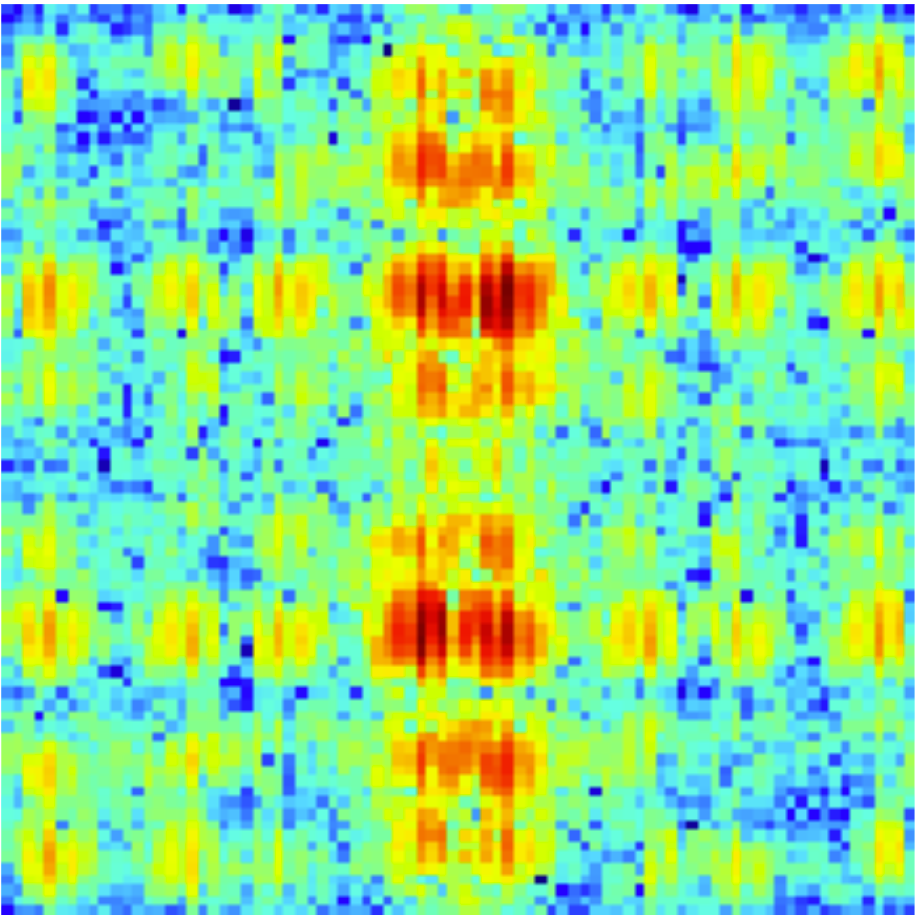}}{}
\hskip 0.1pt
\stackunder[4pt]{\includegraphics[scale=0.2]{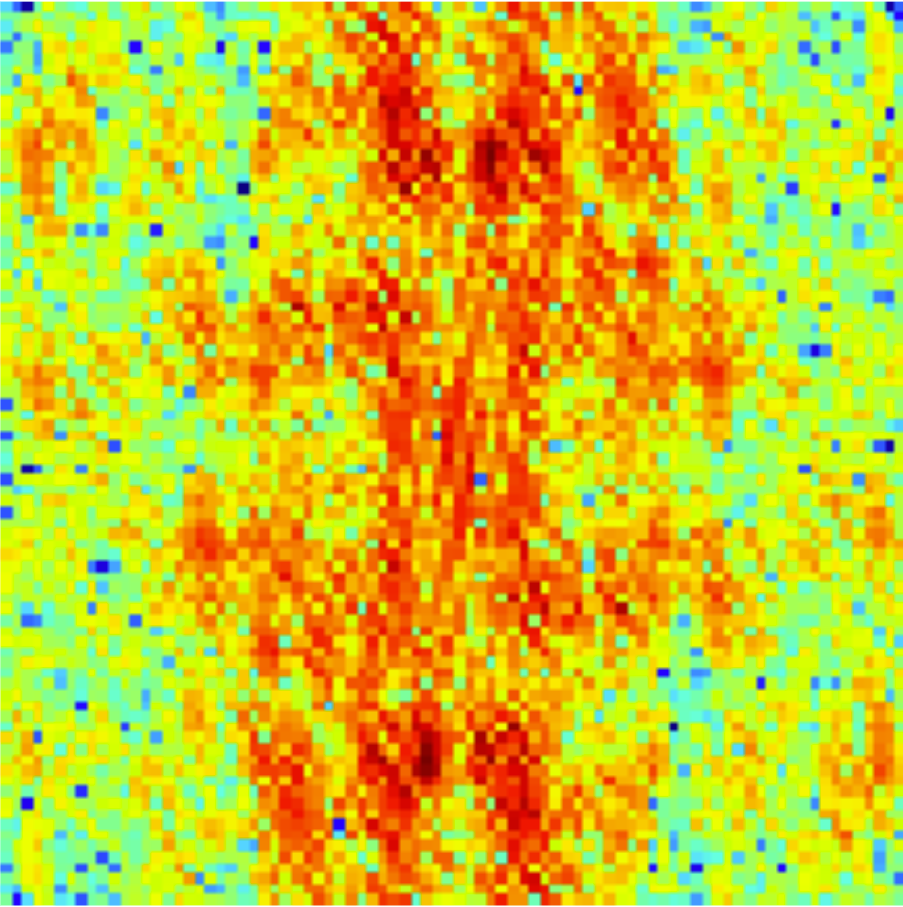}}{}
\stackunder[4pt]{\includegraphics[scale=0.194]{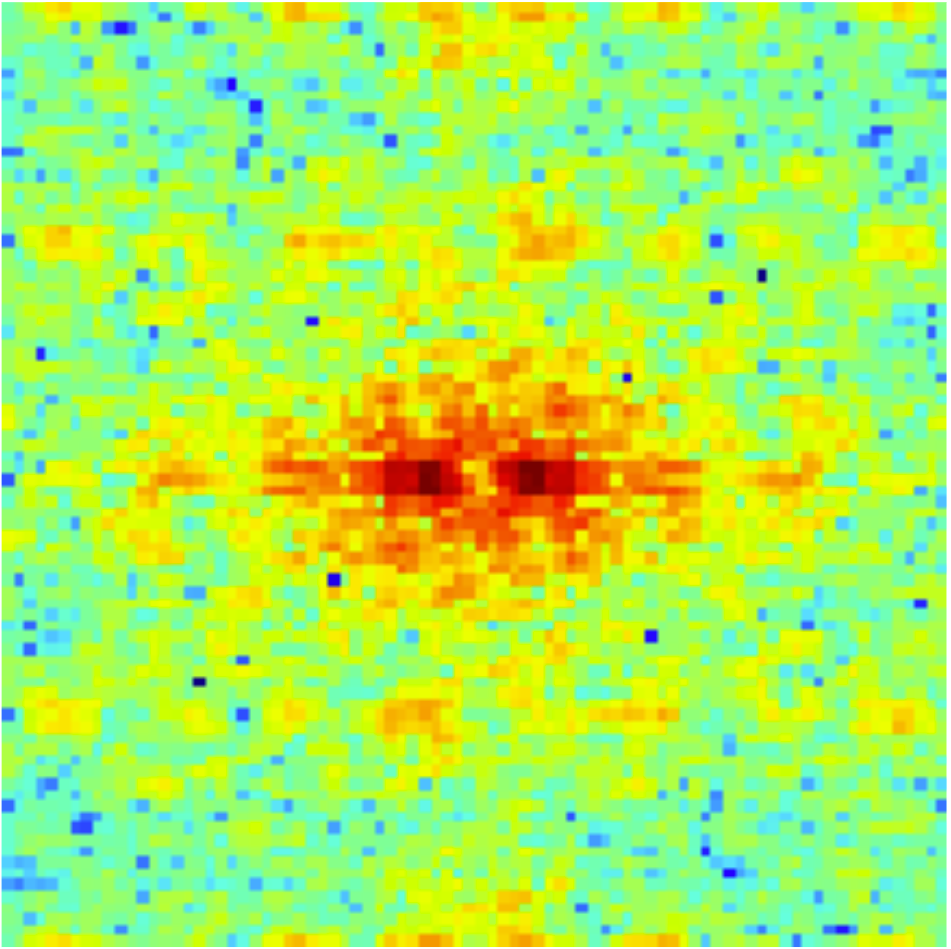}}{}
\stackunder[4pt]{\includegraphics[scale=0.2]{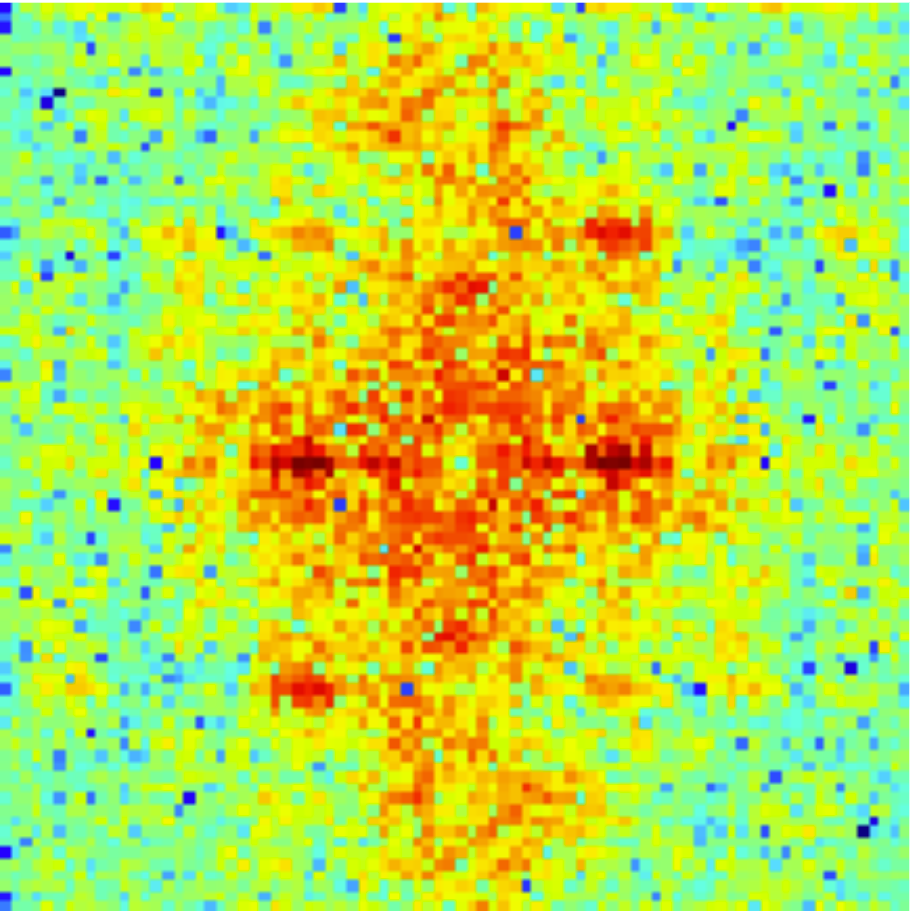}}{}
\vskip -0.1in
\caption{Fourier spectrum of the RA-NLD of the state-of-the-art adversarially and vanilla trained deep neural policies.\footnotemark Row1: Adversarial. Row2: Vanilla. Column1: RoadRunner. Column2: BankHeist. Column3: Pong. Column4: Freeway}
\label{advft}
\vskip -0.04in
\end{figure*}
\begin{figure}[t!]
\vskip -0.07in
\centering
\footnotesize
\stackunder[6pt]{\includegraphics[scale=0.099]{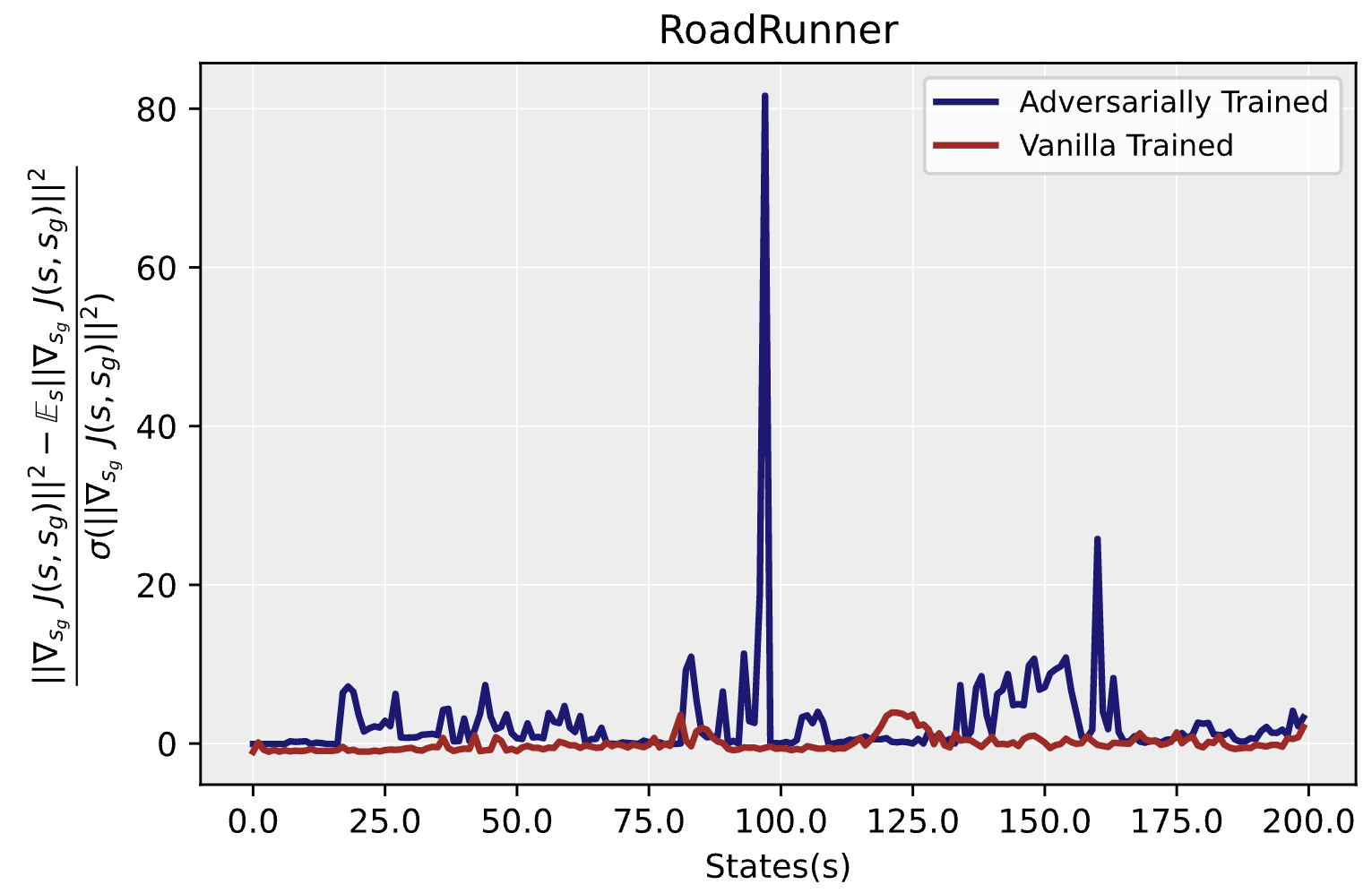}}{}
\hskip 0.1pt
\stackunder[6pt]{\includegraphics[scale=0.099]{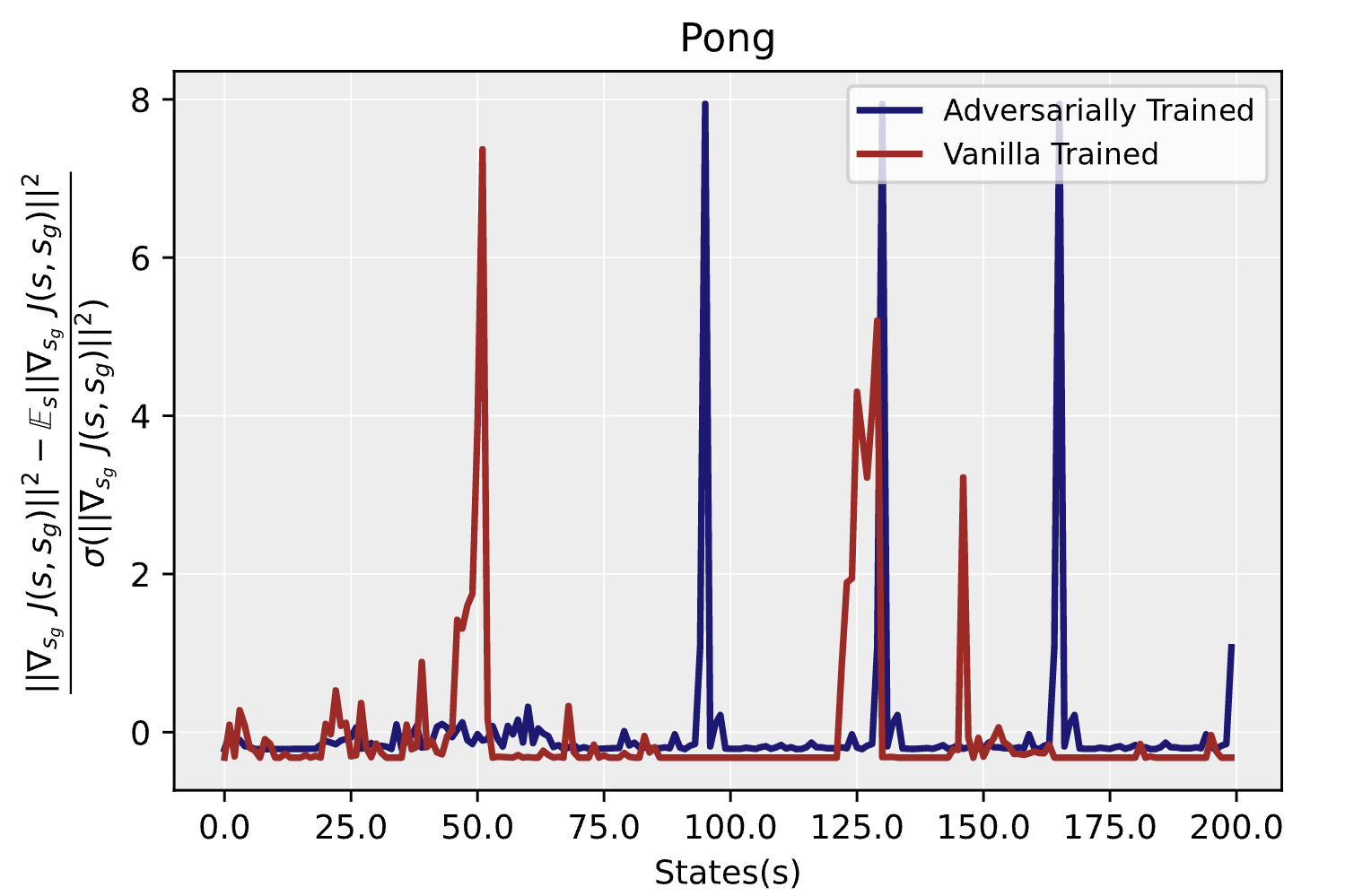}}{}
\stackunder[6pt]{\includegraphics[scale=0.099]{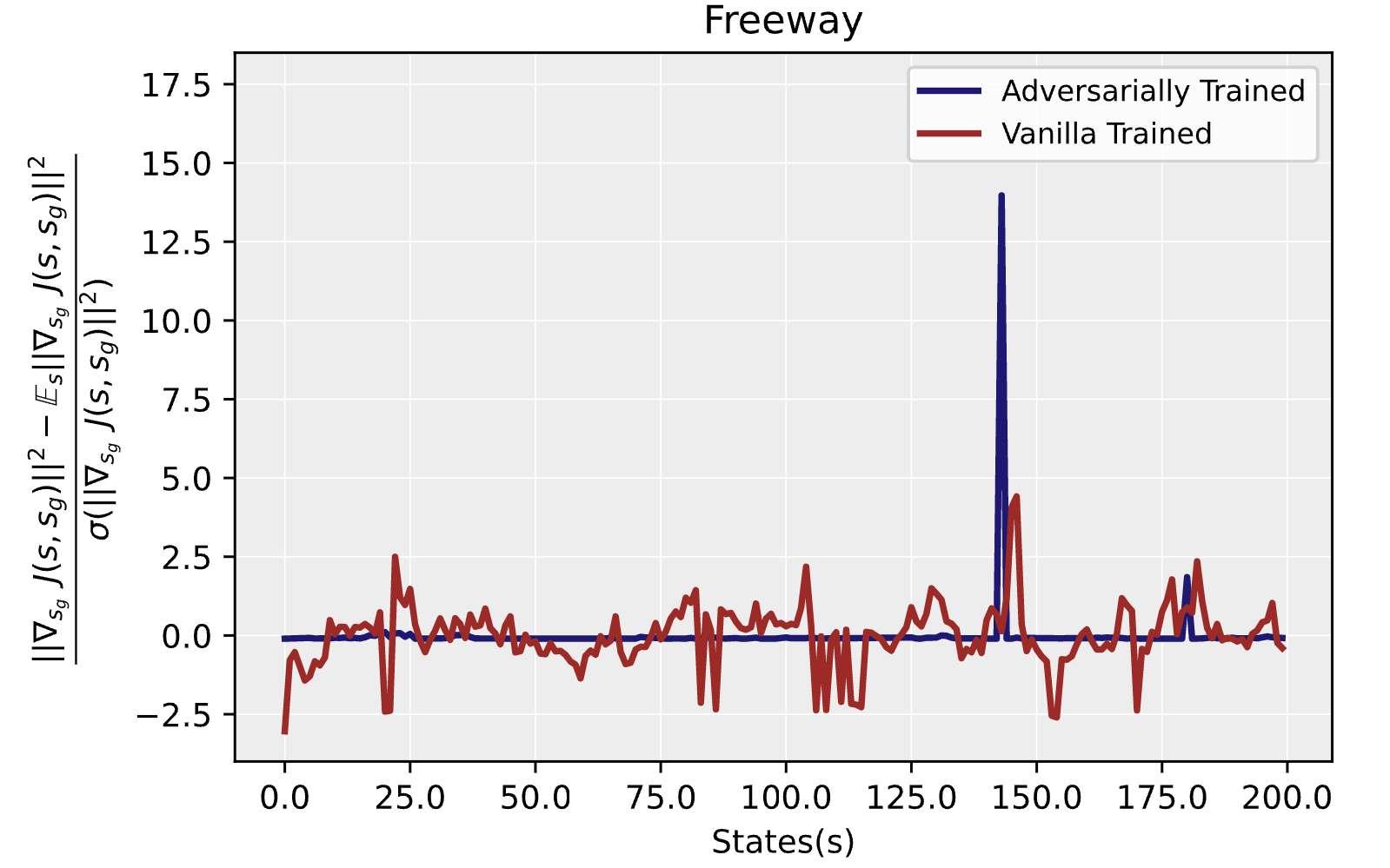}}{}
\vskip -0.12in
\caption{Standardized gradients $\lVert \nabla_{s_g} J(s_i,s_g) \rVert^2$ for vanilla trained and state-of-the-art certified adversarially trained deep reinforcement learning policies.}
\label{advtrace}
\vskip -0.18in
\end{figure}
Nesterov momentum appear most similar to those of the untransformed states. 
Thus, evidently our imaging technique helps to understand the rationale behind policy decision making and the vulnerabilities of deep reinforcement learning policies by allowing us to visualize precisely how non-robust features change with different sets of specifically optimized adversarial directions.
Table \ref{quoadv} reports the feature correlation quotient $\Lambda(\hat{S},S)$ and $\Lambda(S^\Psi,S)$ results where $S$ consists of untransformed states and $S^\Psi$ consists of states modified by the Nesterov Momentum, Carlini\&Wagner, elastic-net regularization and DeepFool formulations respectively.
Note that in all games the setting where $\hat{S}$ consists of a set of untransformed states from an independent execution has
the highest feature correlation quotient $\Lambda(\hat{S},S)$.
Therefore the additional decrease of $\Lambda(S^\Psi,S)$ when $S^\Psi$ is modified by adversarial perturbations can be attributed to changes in non-robust features caused by the perturbations.
Observe also that the qualitative similarity between the visualizations in Figure \ref{gmapattack} of the different transformed states is matched by their ranking under $\Lambda(S^\Psi,S)$, i.e. sorting from largest to smallest correlation quotient for BankHeist yields Nesterov momentum, Elastic-Net, and then Carlini\&Wagner. The fact that the feature correlation quotient has distinct results for untransformed states and for states under all the types of adversarial formulations indicates that RA-NLD can facilitate detecting different types of adversarial perturbations.

Measuring stimulus response to visual illusions has been used as an analysis tool in neural processing \citep{hubel62,alex96,gerald08,seymour18}.
One way to understand our approach is to examine the studies that focus on investigating the cortical area, parahippocampal cortex and hippocampus against visual illusion stimulus \citep{alex96,axel17}.
\begin{figure}[t]
\centering
\vskip -0.14in
\footnotesize
\stackunder[2pt]{\includegraphics[scale=0.16]{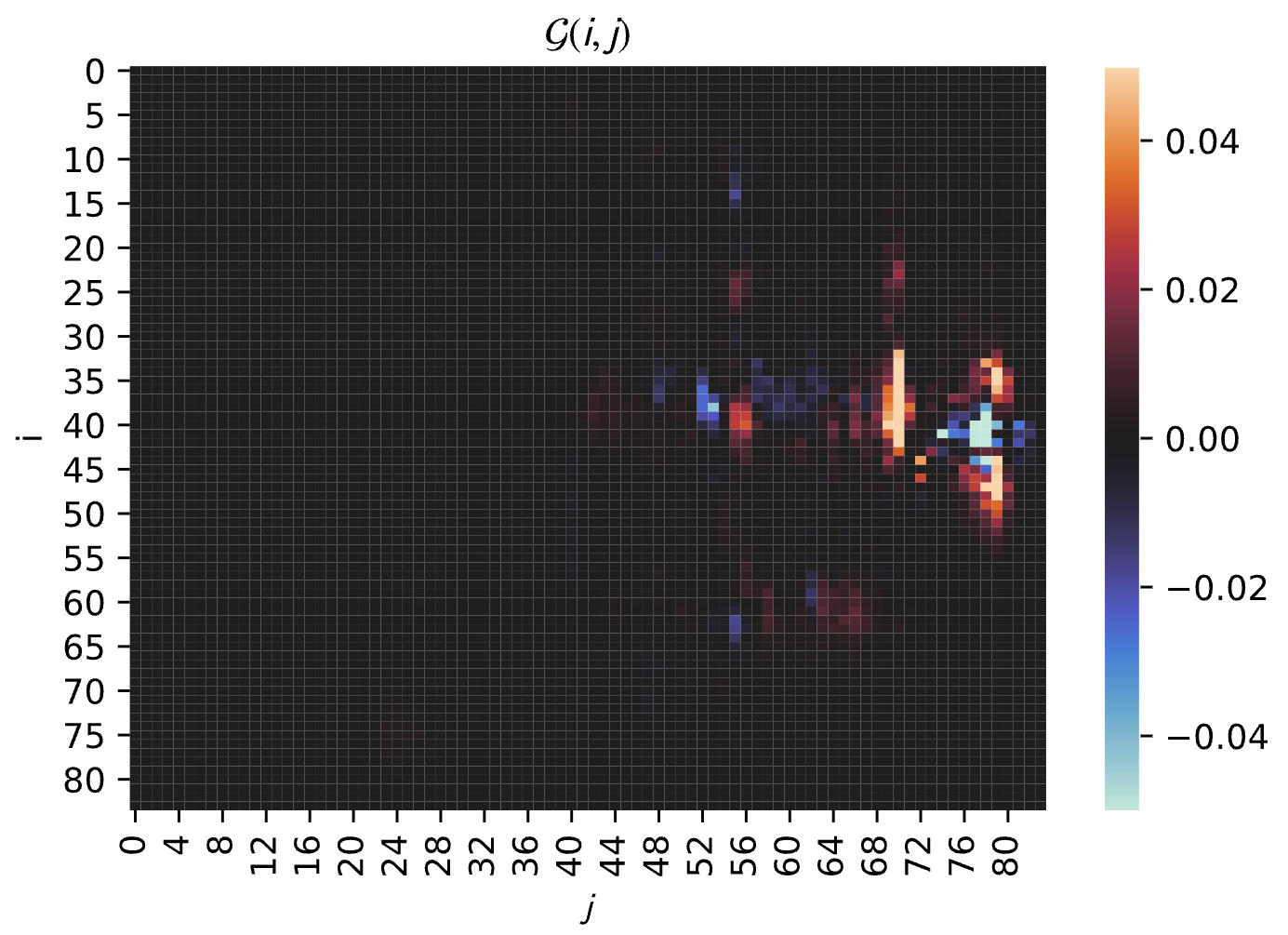}}{RoadRunner}
\hskip 0.1pt
\stackunder[2pt]{\includegraphics[scale=0.16]{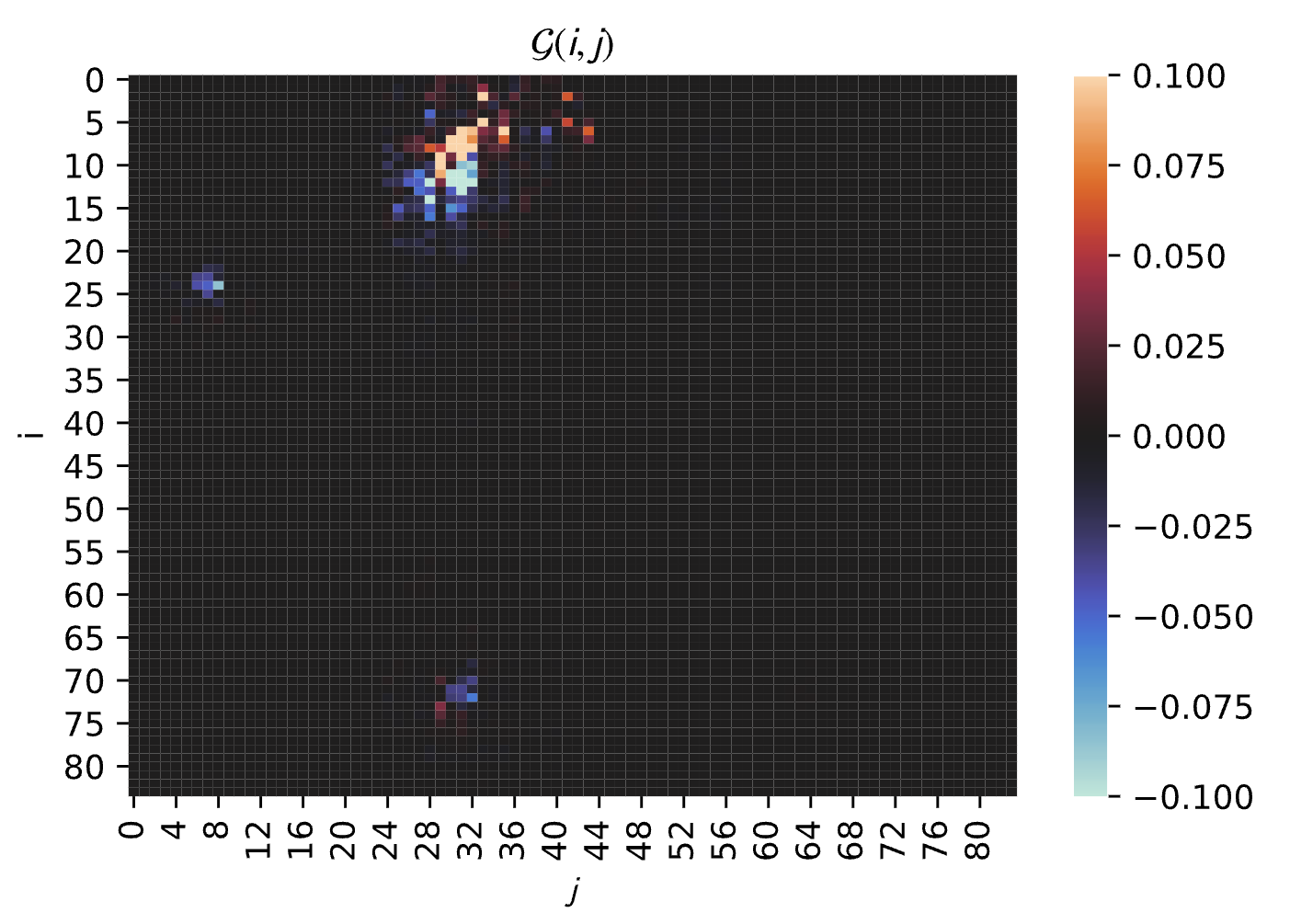}}{BankHeist}\\
\stackunder[2pt]{\includegraphics[scale=0.16]{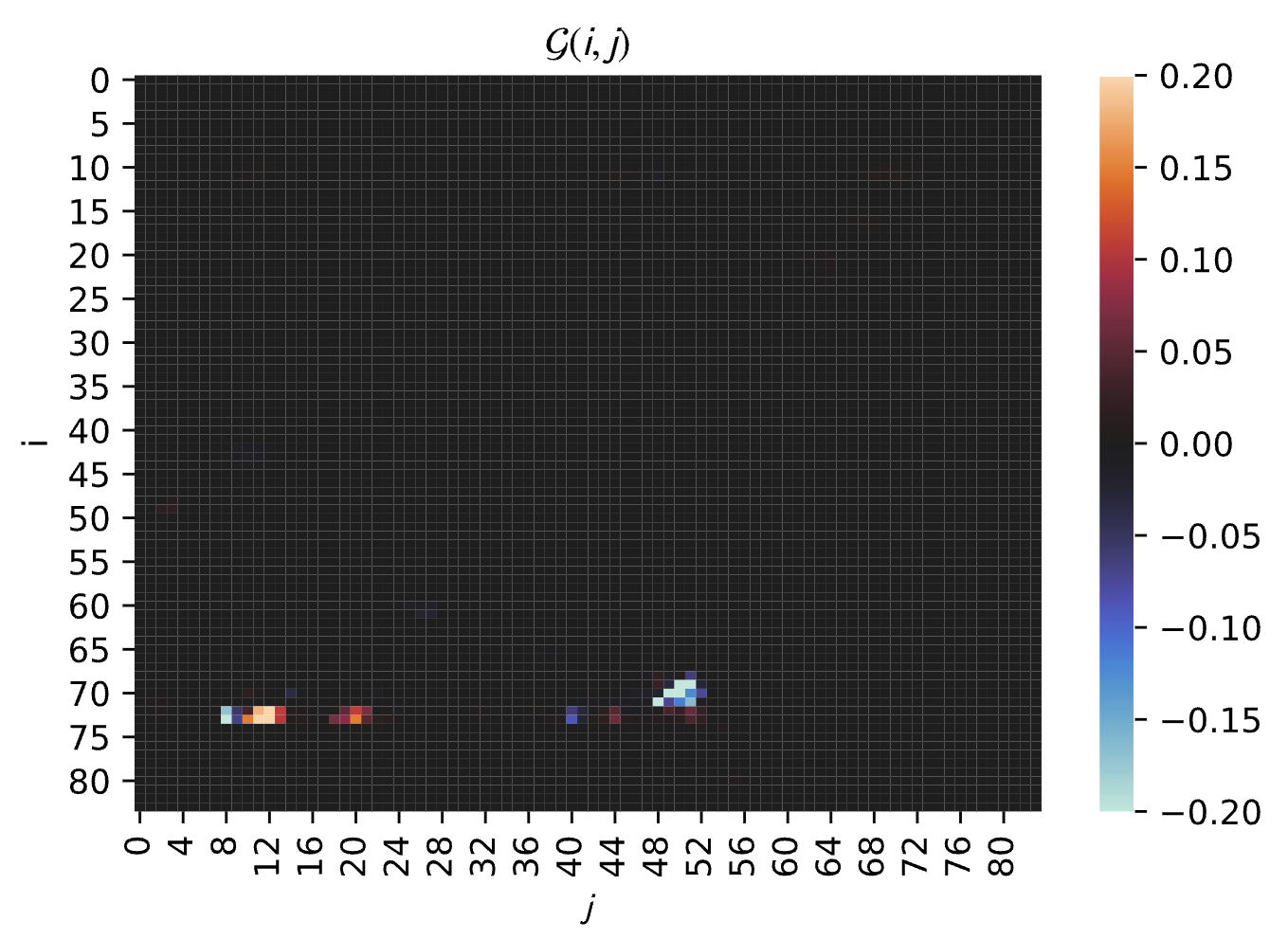}}{Pong}
\stackunder[2pt]{\includegraphics[scale=0.16]{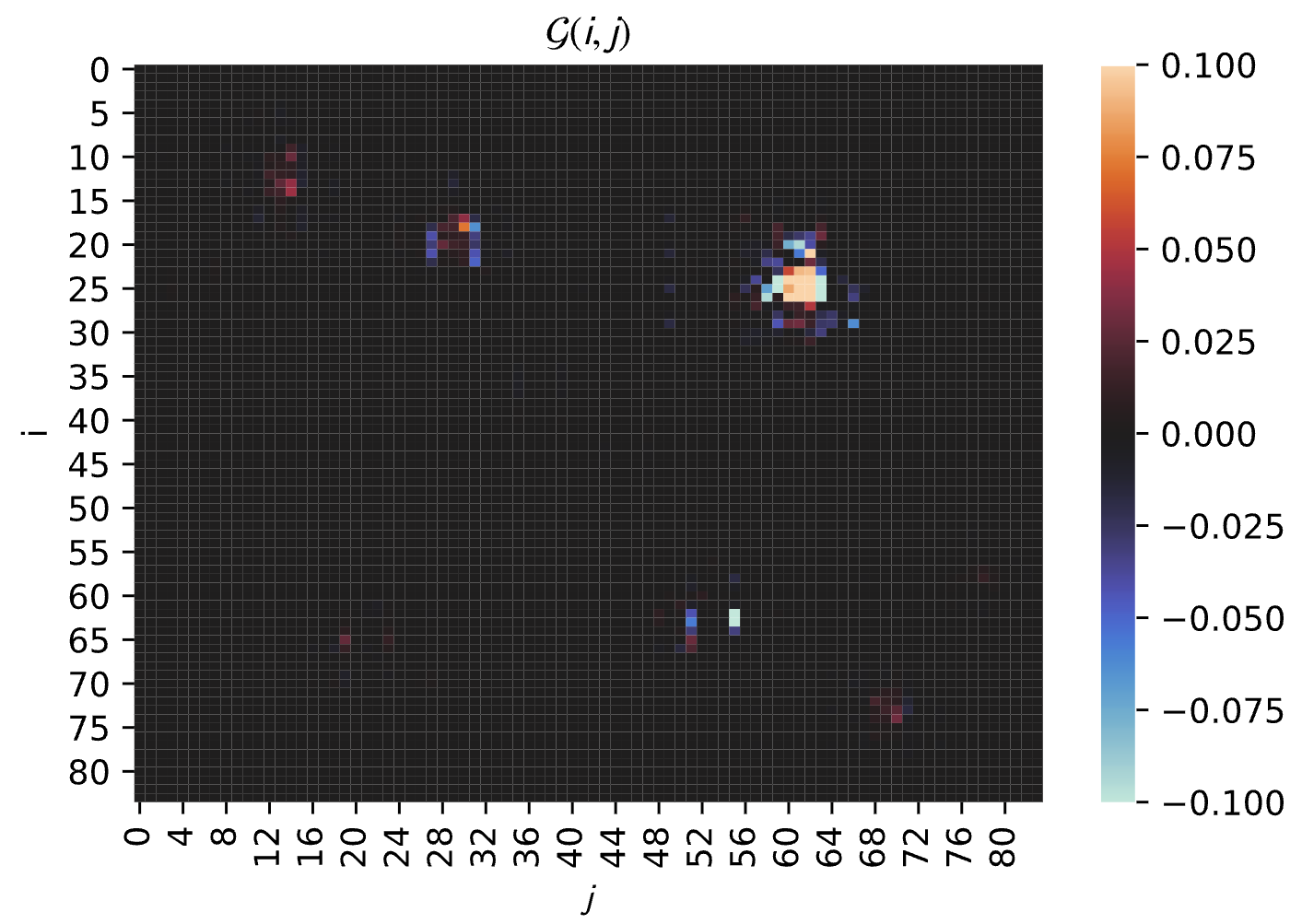}}{Freeway}
\vskip -0.12in
\caption{Principal non-Lipschitz direction $\mathcal{G}(i,j)$ for the state-of-the-art certified adversarially trained deep reinforcement learning policies for BankHeist, Pong, Freeway and RoadRunner.}
\label{advgmap}
\vskip -0.12in
\end{figure}

\begin{table*}[t]
\caption{The feature correlation quotient $\Lambda(S',S)$ in BankHeist, Freeway, RoadRunner, and Pong for the natural transformations: brightness and contrast, compression artifacts, rotation modification, perspective transform, blurred observations.}
\label{ent}
\centering
\scalebox{0.94}{
\begin{tabular}{lccccccr}
\toprule
Distributional Shift     &  Freeway
                				  &  BankHeist
                				  &  RoadRunner
                          &  Pong \\
\midrule
Untransformed States       & 0.9917$\pm$0.0023      & 0.8360$\pm$0.0116   & 0.7652$\pm$0.0385    & 0.4934$\pm$0.0391     \\
Brightness and Contrast    & 0.86756$\pm$0.0271     & 0.3095$\pm$0.0429   & 0.4369$\pm$0.0334    & 0.1678$\pm$0.0427     \\
Compression Artifacts      & 0.90564$\pm$0.237      & 0.38814$\pm$0.022   & 0.24358$\pm$0.0204   & 0.49341$\pm$0.0191    \\
Rotation Modification	   	 & 0.1381$\pm$0.0081      & 0.2951$\pm$0.0062   & 0.3350$\pm$0.0050    & 0.13648$\pm$0.0032    \\
Perspective Transform      & 0.3010$\pm$0.0281      & 0.1723$\pm$0.0311   & 0.3308$\pm$0.0274    & 0.4278$\pm$0.0196    \\
Blurred Observations       & 0.2657$\pm$0.0148      & 0.0954$\pm$0.0127   & 0.2496$\pm$0.0162    & 0.0847$\pm$0.0083     \\
 \bottomrule
\end{tabular}
}
\end{table*}
\footnotetext{Figure \ref{advft} reports the Fourier transform of $\mathcal{G}_S$ where $S$ is collected from a vanilla and adversarially trained policies in RoadRunner, BankHeist, Pong and Freeway. The Fourier transform reveals clear differences in the spatial frequencies occupied by $\mathcal{G}_S$ under vanilla and adversarial training. There is a consistent trend that the larger entries of the Fourier transform are more evenly and smoothly spread out for the adversarially trained policies. Thus, adversarial training leaves a consistent signature on the non-robust features detectable via the Fourier transform of $\mathcal{G}_S$. There is also a change in orientation: if the larger entries of the Fourier transform for the vanilla trained policy are more spread out along one axis, the adversarially trained Fourier transform is more spread along the other.}

\subsection{Vulnerable Representations Learnt via Certified Adversarial Training}
\label{advtrain}
In this section we investigate the effects of adversarial training on the correlated non-robust features. 
In particular, the SA-DDQN algorithm adds the regularizer $\mathcal{R}$, 
\[
  \mathcal{R}(\theta) = \sum_s \left(\max_{\bar{s} \in D_{\epsilon}(s)} \max_{a\neq a^*(s) } Q_{\theta}(\bar{s},a) - Q_{\theta}(\bar{s},a^*(s)) \right).
\]
during training in the temporal difference loss.
Figure \ref{advgmap} shows the RA-NLD results for the state-of-the-art adversarially trained deep reinforcement learning policies.
The non-robust features of the adversarially trained deep neural policies are much more tightly concentrated on disjoint coordinates in the state observations,
and these areas of concentration have moved significantly from where they were under vanilla training.
Thus, the visualization allows us to see that correlated, non-robust features persist in adversarially trained policies, albeit in different locations with disjoint patterns than vanilla trained deep reinforcement learning policies.
To complete our analysis of adversarial training we further include results on how non-robust features vary across time. For this purpose the $\ell_2$-norm of the gradient $\lVert \nabla_{s_g} J(s_i,s_g) \rVert^2$ in each state $s_i \in S$ is recorded
for both adversarially trained and vanilla trained policies in RoadRunner, Pong, and Freeway. The results are plotted in Figure \ref{advtrace}.
In both RoadRunner and Freeway, the adversarially trained policy has much higher variance in the gradient norm and thus in the level of instability.
This is in contrast to the vanilla trained policy which tends to have a much smoother distribution which remains closer to the mean. These results indicate that adversarial training introduces higher jumps in sensitivity over states (i.e. extreme instability) when compared to vanilla training.

\begin{figure*}[t]
	\vskip -0.08in
\centering
\footnotesize
\stackunder[0.2pt]{\includegraphics[scale=0.214]{ponggmapnorm2.png}}{Untransformed}
\hskip 0.1pt
\stackunder[0.2pt]{\includegraphics[scale=0.208]{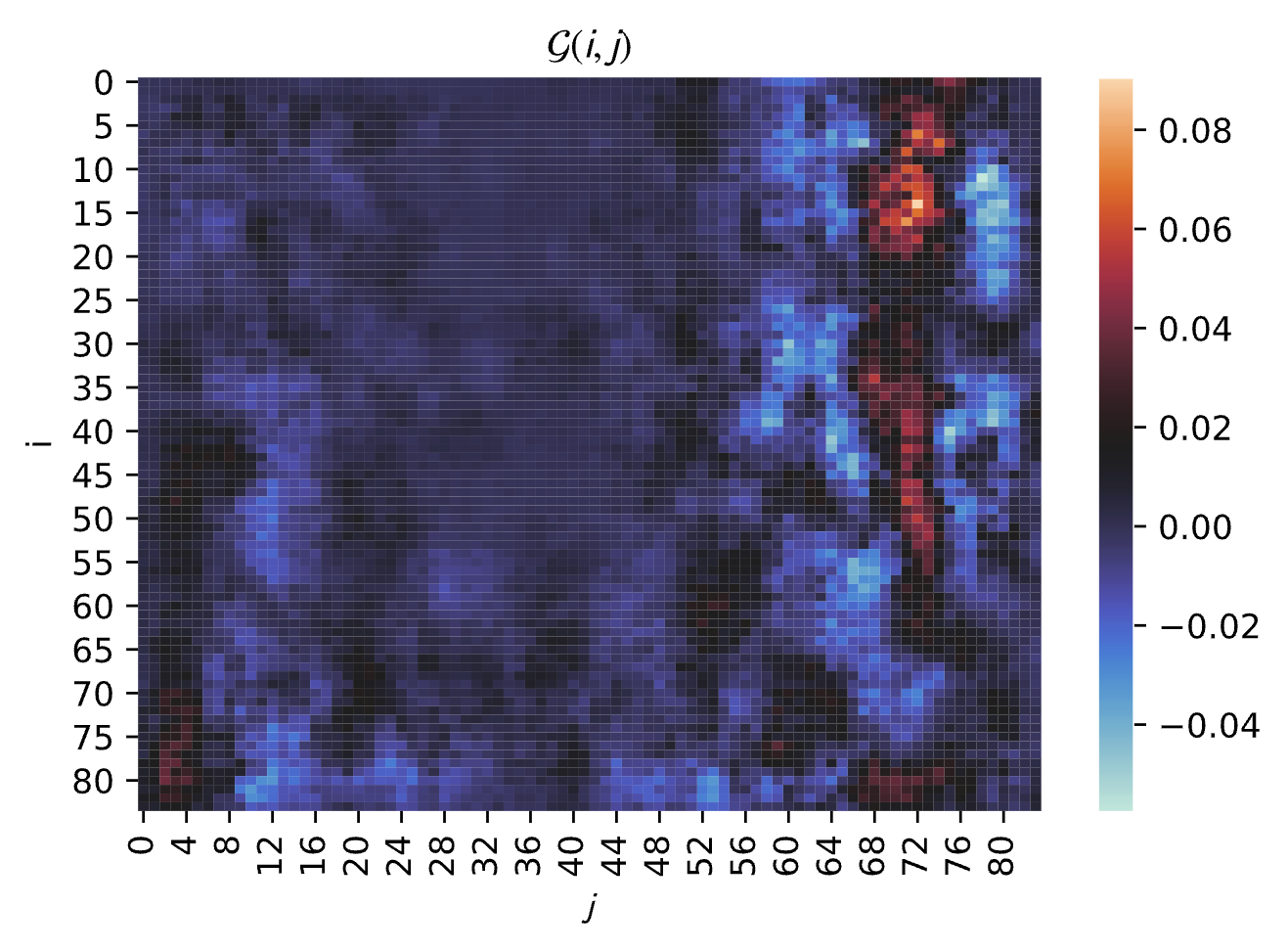}}{Rotated}
\stackunder[0.2pt]{\includegraphics[scale=0.20]{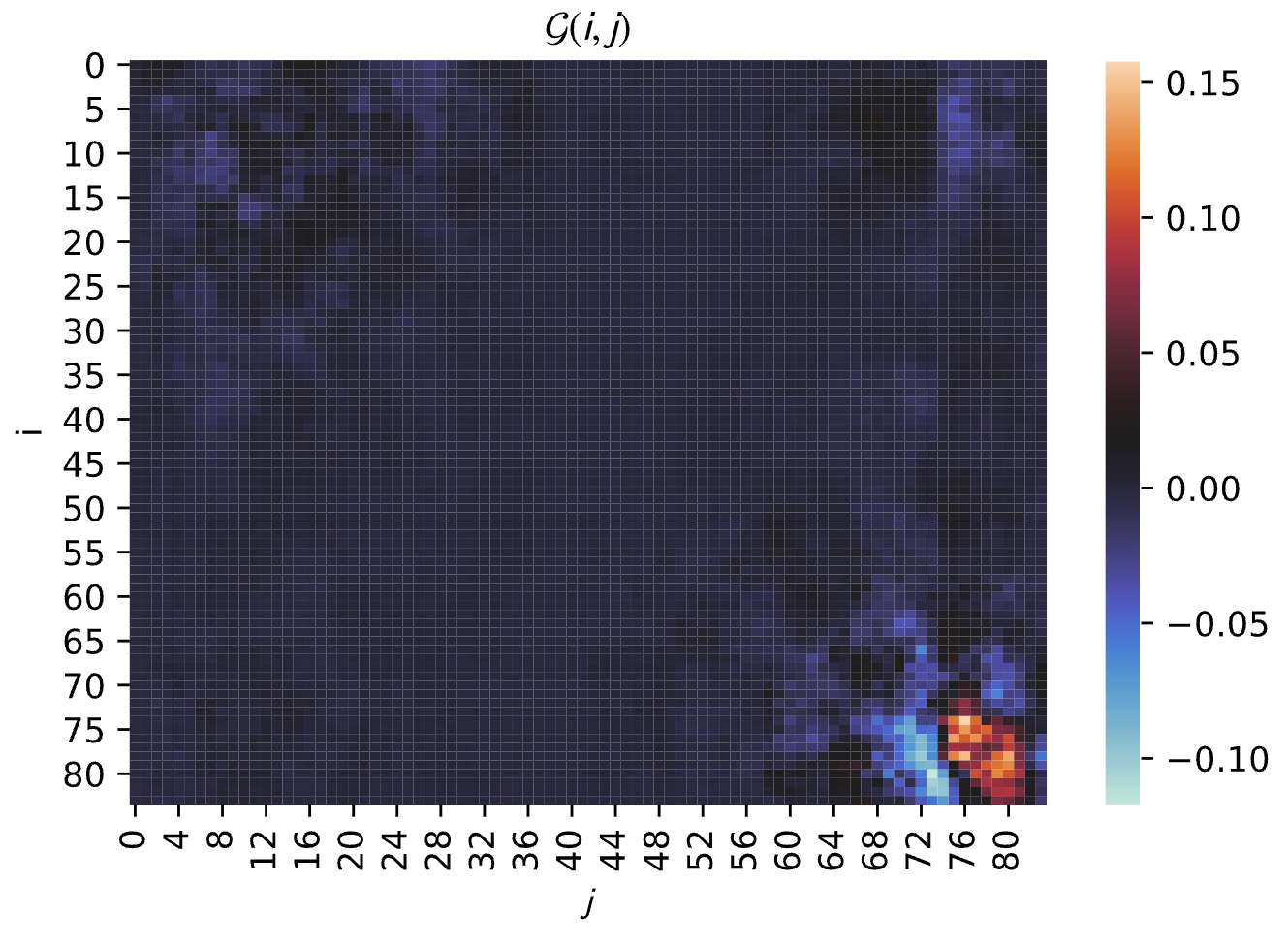}}{Blurred}\\
\vskip -0.01in
\stackunder[0.2pt]{\includegraphics[scale=0.20]{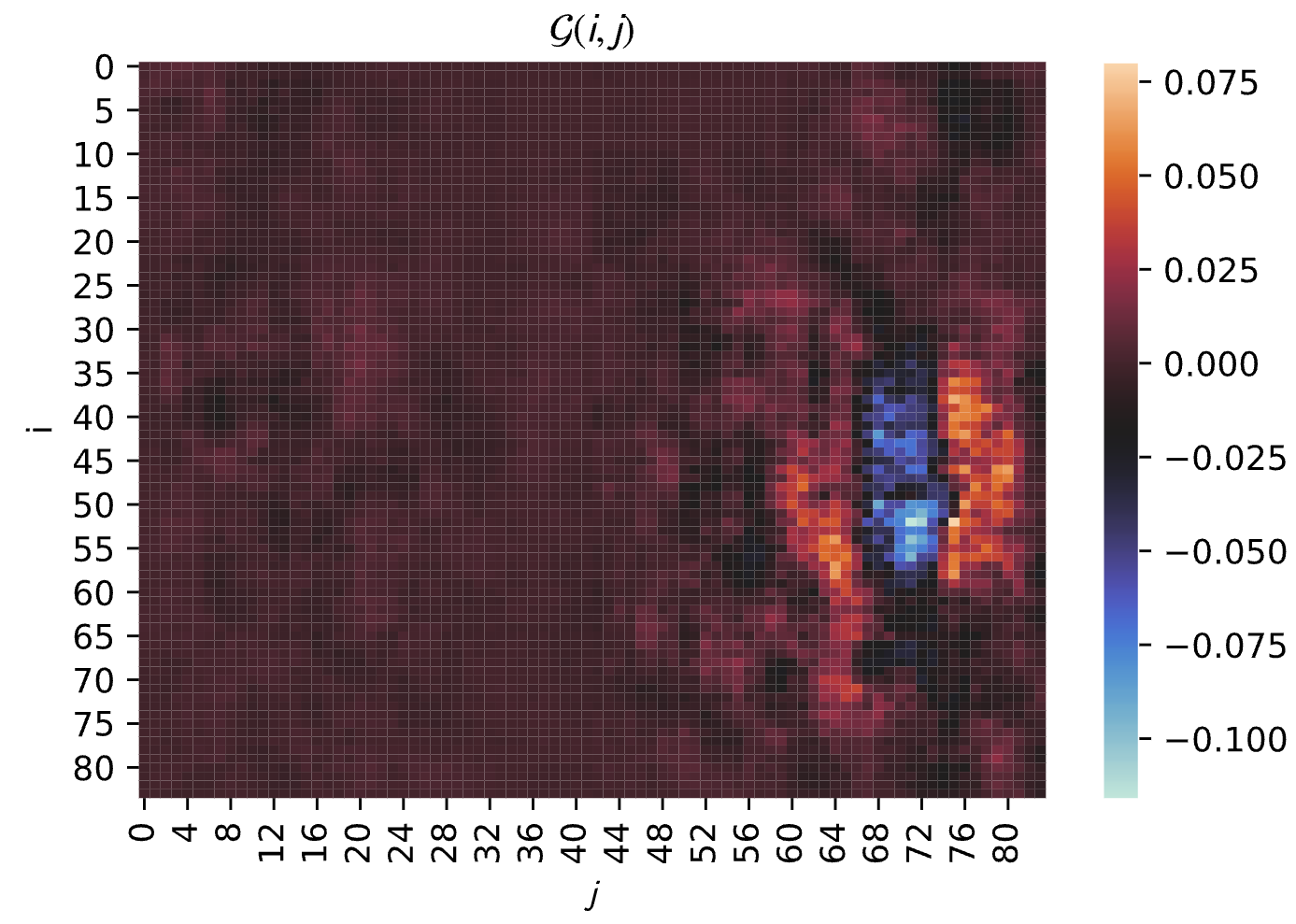}}{Compression Artifacts}
\stackunder[1pt]{\includegraphics[scale=0.20]{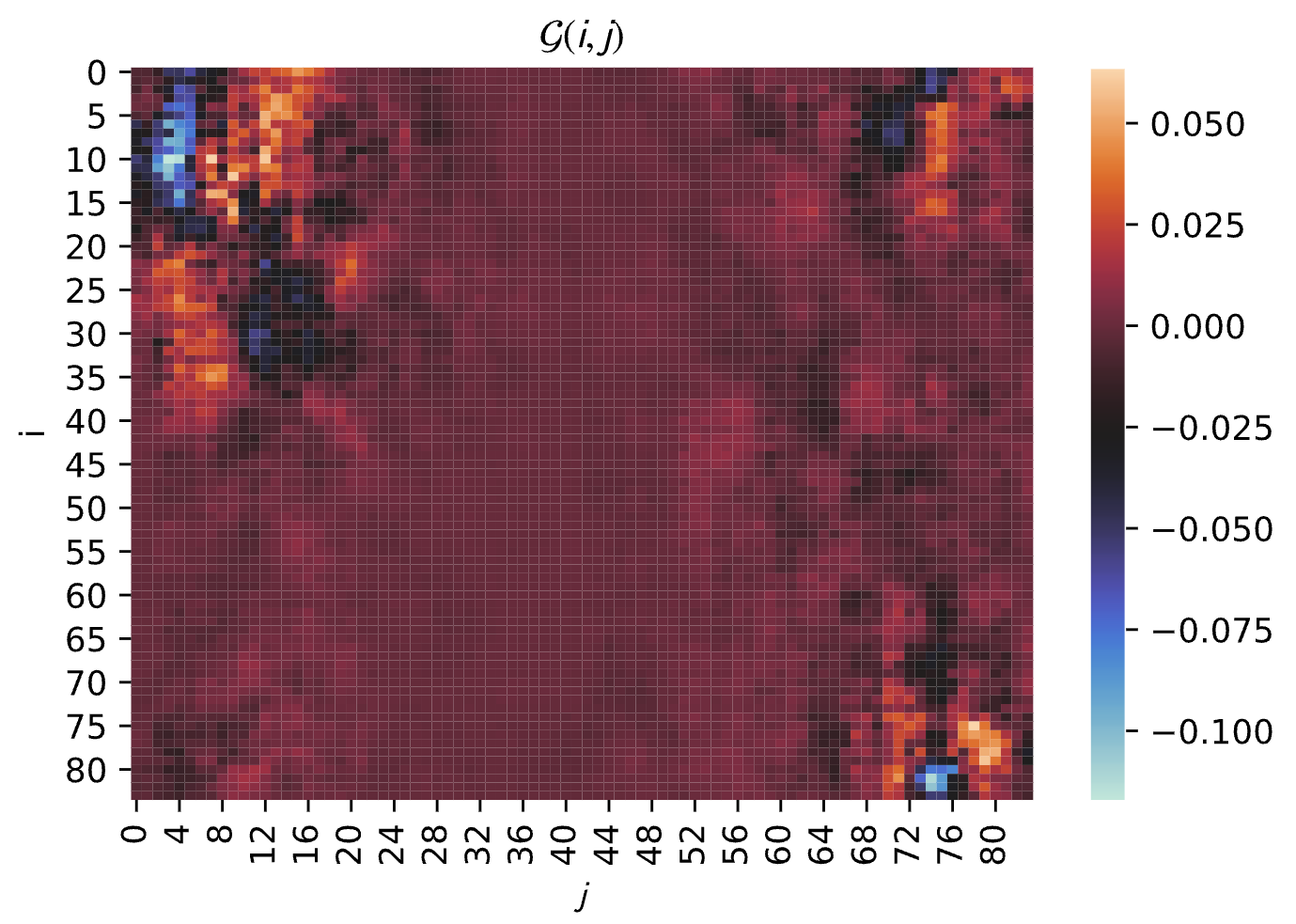}}{Perspective Transform}
\stackunder[1pt]{\includegraphics[scale=0.20]{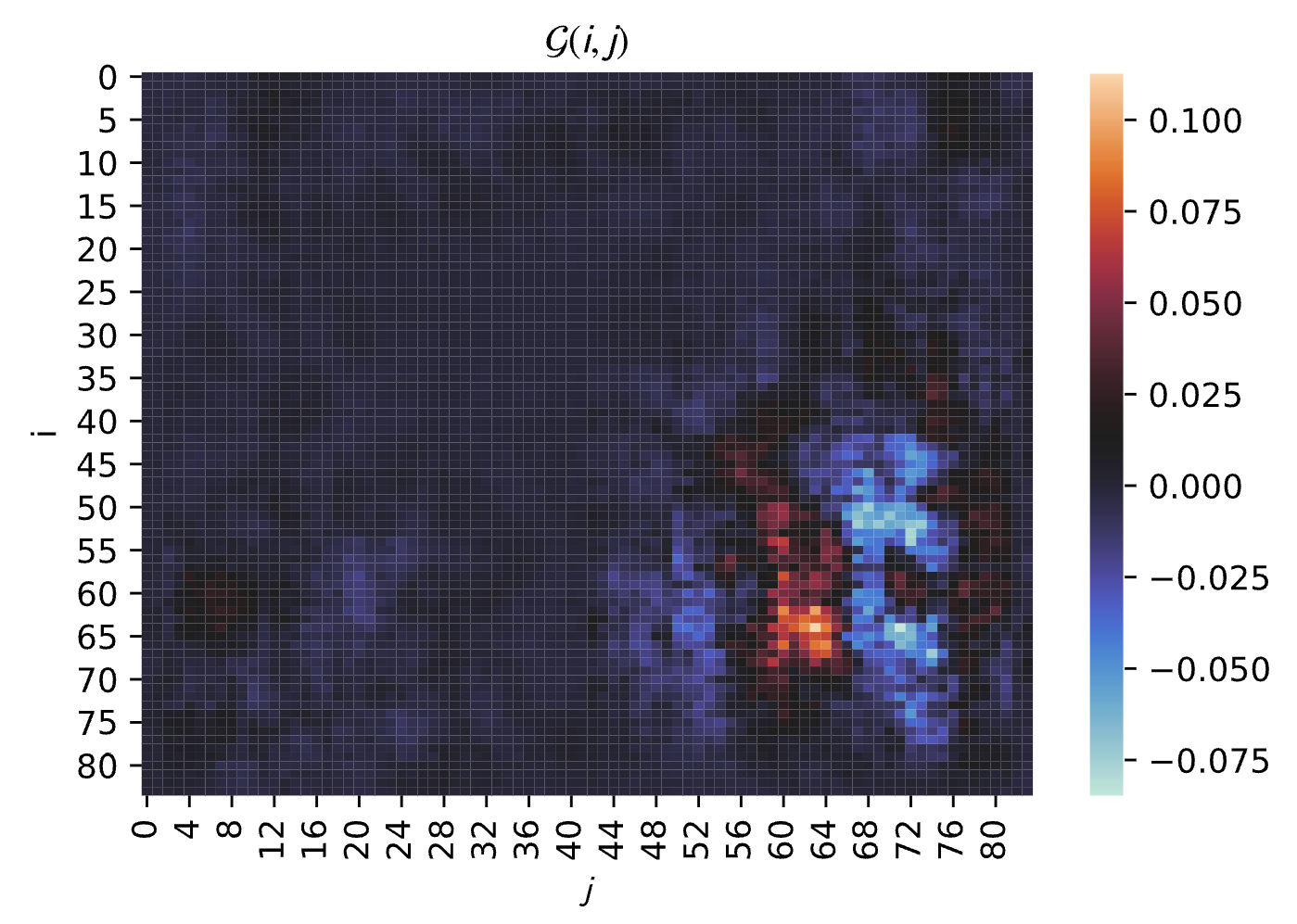}}{Brightness and Contrast}
\vskip -0.12in
\caption{RA-NLD results of untransformed state observations and states under natural transformations with rotation, perspective transformation, blurring, compression artifacts, and B\&C for Pong.}
\label{pongenv}
\end{figure*}

\subsection{The Effects of Imperceptible Distributional Shift on the Directions of Instabilities}

To evaluate the effects of distributional shift on the learnt policy we provide analysis on several environment modifications with RA-NLD. These transformations are natural semantically meaningful changes to the given MDP that correspond to imperceptible modifications to the state observations. In particular, the imperceptibility $\mathcal{P}_{\textrm{similarity}}$ is measured by, $\mathcal{P}_{\textrm{similarity}}(s,\Psi(s)) = \sum_l\frac{1}{H_l W_l}\sum_{h,w} \lVert w_l \odot (\hat{y}^l_{shw} - \hat{y}^l_{\Psi(s)hw}) \rVert_2^2 $
where $\hat{y}_s^l,\hat{y}_{\Psi(s)}^l \in \mathbb{R}^{W_l\times H_l\times C_l}$ represent the vector of unit normalized activations in the convolutional layers with width $W_l$, height $H_l$, and $C_l$ is the number of channels.\footnote{These imperceptible transformations include perspective transform, blurring, rotation, brightness, contrast, and compression artifacts as proposed in \citet{korkmaz2023aaai}.
In particular, brightness and contrast is given by linear transformation, and compression artifacts are the diminution in high frequency components due to JPEG conversion.
Note that this recent work demonstrates that these natural imperceptible transformations cause more damage to the policy performance compared to adversarial perturbations, and further highlights that the certified adversarial training is more vulnerable towards these natural attacks.
}
Figure \ref{pongenv} reports $\mathcal{G}_S$ for states $S$ collected under the six environment modifications mentioned above.
For the untransformed setting the visualization of $\mathcal{G}_S$ clearly emphasizes the center of the region where the agent's paddle moves up and down to hit the ball. The components of $\mathcal{G}_S$ take larger positive values at the center of this region and transition to negative values along the boundary. A similar emphasis can be found for the case of compression artifacts, but with the signs reversed (i.e. the center of the region is negative and the boundary is positive).
The other transformations exhibit larger changes in the regions emphasized in the visualization with perspective transform, blurring, rotation, and B\&C causing the emphasized region to move to different locations.
Table \ref{ent} contains the values of $\Lambda(\hat{S},S)$ and $\Lambda(S^\Psi,S)$ where $S$ is collected from an untransformed run and $S^\Psi$ is collected from each of the six different transformations. 
In every game the largest value of $\Lambda(\hat{S},S)$ occurs when $\hat{S}$ comes from an independent untransformed run, indicating that the additional decrease observed for $S^\Psi$ from transformed runs is caused by the respective environmental transformations. It is notable that in Pong the second highest value for $\Lambda(S^\Psi,S)$ occurs for $S^\Psi$ collected with compression artifacts, as this corresponds precisely to the qualitative similarity between the regions emphasized in the visualization of $\mathcal{G}_S$ for untransformed and compression artifacts.
Hence, the results for $\Lambda(S^\Psi,S)$ help us to quantitatively understand the effects of the environmental changes in the MDP, while agreeing well with the qualitative results of the RA-NLD outputs.

\subsection{RA-NLD to Understand Policy Decision Making and Diagnose Non-Robustness}
By leveraging the non-Lipschitz direction analysis not only can we uncover non-robust representations learnt deep neural policies, further we can analyze how their decisions are formed given an MDP and a training algorithm and what makes these decisions change under different influences from adversarial manipulations and natural changes in a given environment. 
While the RA-NLD visualizations give us semantically meaningful information on how policy decisions are influenced and the non-robust features learnt by the deep neural policy, they also provide a detailed understanding of how these volatile representations change under non-stationary MDPs.
The fact that RA-NLD can provide fine-grained vulnerability analysis of deep reinforcement learning policies under adversarial attacks, with distributional shift and with different training algorithms can help with diagnosis of policy vulnerabilities in the development phase.
Conducting ablation studies with RA-NLD in reinforcement learning algorithm design can prevent building policies with inherent non-robustness, and our algorithm can be utilized to visualize and identify the effects of several design choices (e.g. algorithm, neural network architecture) on the volatile patterns learnt by the policy from the MDP.
In particular, given a visualization of the vulnerability pattern for a trained policy, one can try to modify the training environment in a way that will make the policy invariant to the non-robust features revealed by RA-NLD. Such modification could include changing the state representation in a way that does not change the semantics of the MDP or the task at hand, but does change the inherent non-robustness in question. Furthermore, the effect of modifications to training algorithms can  also be directly visualized, as exemplified by our results for adversarial training. Thus our method gives a straightforward way to diagnose or debug any proposed methods in terms of their effects on
the non-robustness of the neural policy and the volatile representations learnt by it.

One intriguing fact is that RA-NLD can uncover the vulnerable representations learnt by the certified adversarial training techniques. From the safety point of view it warrants significant concern that the algorithms targeting and certifying robustness end up learning non-robust representations. From the alignment perspective RA-NLD discovers that certified adversarial training is still producing misaligned deep reinforcement learning policies. Ultimately, for future research directions it is important to lay out exact trade-offs and vulnerabilities for these algorithms to eliminate the bias they can create for future research efforts.
The impact of the imperceptible environmental changes in the MDP is immediately captured by the principal high-sensitivity direction analysis.
The most intriguing aspect of these results is that not only can RA-NLD be used as a diagnostic tool during training, but further the principal non-Lipschitz direction analysis can also guide agents in real life on real-time understanding of the current rationale behind their decisions and their vulnerabilities.
 The RA-NLD algorithm gives us semantically meaningful information on the non-robust features learnt by the deep neural policy, and also provides a detailed understanding of how these non-robust features change under non-stationary environments.

\section{Conclusion}

In our paper we aim to seek answers for the following questions: \textit{
(i) How can we analyze the robustness and reliability of deep reinforcement learning policy decisions?
(ii) What is the relation of non-robust representations learnt by deep neural policy temporally and spatially?
(iii) How do adversarial attacks affect the correlated volatile representations learnt by deep reinforcement learning policies?
(iv) Does adversarial training ensure safety and provide robust policies that do not learn non-robust representations? 
(v) How does distributional shift affect the learnt correlated non-robust features?} To be able to answer these questions we analyze non-Lipschitz directions in the deep neural policy manifold and we propose a novel technique to analyze and lay out correlated non-robust representations learned by deep reinforcement learning policies. We show that deep reinforcement learning policies do end up learning correlated non-robust vulnerable representations, and that adversarial attacks lead to surfacing a new set of non-robust features or highlighting the existing ones. Most importantly, our results show that the state-of-the-art adversarial training techniques, i.e. robust deep reinforcement learning, also end up learning temporally and spatially correlated non-robust features. 
Finally, we demonstrate that distributional shifts introduce different sets of correlated non-robust features compared to adversarial attacks. Hence, our analysis not only allows us to effectively visualize correlated directions of instability, but also allows for precise understanding of changes in the learnt non-robust representations caused by different training algorithms and different methods for altering states. Thus, we believe that our analysis can be critical both in understanding deep reinforcement learning policy decision making and in diagnosing the vulnerabilities of deep neural policies, while further enhancing our ability to design algorithms to improve robustness.

\section*{Impact Statement}

The risks of artificial intelligence regarding safety have never been as prominent as they are in the current time \citep{tobin23}.
From highly capable large language models \citep{gemini23,openai23} to autonomous driving vehicles, these risks arise in real life \citep{nyt23} as regulatory acts are being formed \citep{usa23,euro23,eu23}.
Our paper provides the necessary diagnostic tools to understand and interpret AI systems (i.e. deep reinforcement learning policies).
Our paper introduces a theoretically founded technique to understand the vulnerabilities and volatilities of deep neural policies.
Our results discover that \emph{certified robust} training techniques have spikier volatilities resulting in revealing the current problems of safety guarantees in adversarial training techniques.
We believe that it is crucial to understand the exact problems that might arise from the deep reinforcement learning policies before these policies are deployed in real life \citep{nyt22}.

\bibliography{example_paper2}
\bibliographystyle{icml2024}

\end{document}